%% file: neurips_2026.tex
\documentclass{article}

 \usepackage[preprint]{neurips_2026}

\usepackage[utf8]{inputenc} 
\usepackage{hyperref}       

\usepackage{url}            
\usepackage{booktabs}       
\usepackage{amsfonts}       
\usepackage{nicefrac}       
\usepackage{microtype}      
\usepackage{xcolor}         
\usepackage{graphicx}
\usepackage{tikz}
\usepackage{tikz-cd}
\usepackage{dsfont}
\usepackage{amsfonts}
\usepackage{amsmath}
\usepackage{amsthm}
\usepackage{bbm}
\usepackage{cancel}
\usepackage{comment}
\usepackage[linesnumbered,ruled,vlined]{algorithm2e}
\usepackage{algpseudocode}
\usepackage{amsmath}
\usepackage{stmaryrd}
\usepackage{mleftright}
\usepackage{xcolor}
\usepackage{amssymb}

\usepackage[inkscapeversion=1, inkscapelatex=false]{svg}

\usepackage{apptools}
\AtAppendix{\counterwithin{lemma}{section}}
\AtAppendix{\counterwithin{theorem}{section}}
\AtAppendix{\counterwithin{proposition}{section}}

\input{math_commands}

\definecolor{Red}{HTML}{E74C3C}  
\definecolor{Blue}{HTML}{2E86C1}  
\definecolor{Green}{HTML}{27AE60}  
\definecolor{Orange}{HTML}{D35400}
\definecolor{Purple}{HTML}{8E44AD} \definecolor{Yellow}{HTML}{F39C12}
\newcommand{\IA}{\textcolor{Red}{A}}
\newcommand{\IB}{\textcolor{Blue}{B}}
\newcommand{\IC}{\textcolor{Green}{C}}

\newcommand{\cvc}[1]{{\color{orange} #1}}
\newcommand{\sq}[1]{{\color{Blue} #1}}
\newcommand{\mc}[1]{{\color{teal} #1}}

\newtheorem{remark}{Remark}
\newtheorem{Lemma}{Lemma}

\title{Bridging Maximum Likelihood and Optimal Transport for Efficient Inference and Model Selection in Stochastic Block Models}

\author{%
  Simon Queric \\
  Université Côte d’Azur, Inria, CNRS\\
  LJAD, Maasai\\
  Nice, France \\
  \texttt{simon.queric@inria.fr} \\
  \And
  Cédric Vincent-Cuaz\\
  EPFL \\
  Lausanne, Switzerland \\
  \And
  Charles Bouveyron \\
  Université Côte d’Azur, Inria, CNRS\\
  LJAD, Maasai\\
  Nice, France\\
  \And
  Marco Corneli \\
  Université Côte d’Azur, Inria, CNRS\\
  CEPAM, Maasai\\
  Nice, France\\
}

\begin{document}
\maketitle
\begin{abstract}
We study inference in stochastic block models (SBMs) through the lens of optimal transport (OT). We first establish that maximum likelihood variational inference (MLVI) can be interpreted as a semi-relaxed Gromov–Wasserstein (srGW) projection with entropic regularization. While this formulation yields accurate clustering, the entropic regularization prevents  transport plans to be sparse,  hindering intrinsic model selection. Consequently, we investigate unregularized srGW estimators,
and prove that they consistently recover both the SBM connectivity matrix and latent cluster assignments in the asymptotic regime. However, this asymptotic property does not translate into reliable model selection in finite samples, and calls for additional mechanisms to promote sparsity in the inferred cluster proportions.
We empirically show that such a regularized formulation yields estimators that simultaneously recover model parameters and select the number of clusters in a single optimization problem, thereby avoiding costly grid search or heuristic model selection procedures.
\end{abstract}

\section{Introduction}

Although some recent works investigate connections between OT and either mixture models~\citep{vayer2025noterelationsmixturemodels,SlicedGMM} or maximum likelihood estimation~\citep{rigollet2018EOTML},  
until now, very little has been explored concerning links between Gromov-Wasserstein OT and stochastic block modeling.
Let us first (briefly) introduce each of these two topics. Stochastic block models~\citep[SBMs,][]{holland1983stochastic,nowicki2001estimation,Daudin} are a popular family of generative models for random graphs, widely used in network analysis~\citep{borgatti2009network}. 
In its simplest formulations, SBM assumes that the nodes of a graph are partitioned in hidden groups (or \emph{clusters}) and the probability of a tie between two nodes only depends on the clusters they belong to. In the context of a single observed graph (the one we care about in this work), fitting SBM to the graph can be done to perform nodes clustering, as an alternative to a huge number of other approaches including modularity maximization~\citep[][]{Louvain,fortunato2010community}, spectral clustering~\citep[][]{von2007tutorial,Lei_2015}, graph neural networks~\citep[][]{kawamoto2018mean,shah2024neurocut}, optimal transport~\citep{xu2019scalable,srGW} and kernel methods~\citep{kloster2014heat}, just to cite a few. Although being far from capturing some stylized features of real networks~\citep[see for instance][]{karrer2011stochastic}, SBMs have at least two key advantages with respect to a number of the alternative approaches cited above: i) they can recover clusters of nodes other than communities (e.g. hubs, non assortative structures, etc.) and  ii) they come with Bayesian model selection tools, mostly Integrated Classification Likelihood~\citep[ICL,][]{biernacki2000assessing}, allowing one to estimate the number of clusters of nodes, whose prior knowledge is instead required by many other methods.



\paragraph{Inference in SBMs and statistical guarantees} 
Inference in SBMs has been carried out in many ways including purely Bayesian solutions, such as Gibbs~\citep{nowicki2001estimation} or allocation~\citep{mcdaid2013improved} sampling, as well as greedy search schemes based on the exact ICL~\citep{Greed21}, allowing one to cluster the graph's nodes and select the number of clusters together, in a computationally efficient way.
The first results proving asymptotic consistency of maximum likelihood and variational estimators appeared in the seminal paper~\cite{Daudin}. A series of works then studied minimax optimality for least squares estimations \cite{Gao_2015} and, more recently, maximum likelihood ~\citep{GAUCHERMLE} and variational estimators~\citep{GaucherSBM} were derived in the general and challenging case of missing observations. 
Others lines of work aim at inspecting to what extent the clusters of nodes in graphs generated by SBMs can be retrieved by techniques such as spectral clustering~\citep{saade2014spectralclusteringgraphsbethe,Lei_2015} 
or agnostic-sphere-comparison~\cite{abbe2015}.
The above cited works mainly show asymptotic results.

The second topic of interest in this paper is Optimal Transport (OT) for graph data. OT~\cite{OTVillani,COT,pereira2025survey}  provides a principled framework for comparing probability distributions with broad applications in statistical learning. In its classical formulation, OT operates on distributions defined on a \emph{common ambient space}, which limits its applicability to structured objects, such as graphs defined by pairwise relationships, that may not share a common representation space. The recent Gromov-Wasserstein (GW) framework overcomes this limitation by comparing objects through their \emph{internal geometry}. Rather than aligning points directly, GW aligns pairwise relationships, enabling the comparison of distributions across different metric spaces~\cite{GWMemoli,sturm2023space}, and more generally of graphs represented by relational matrices together with probability distributions over their nodes \cite{chowdhury2019gromov}. In this setting, GW defines a permutation-invariant metric between graphs, making it a natural tool for graph matching \cite{xu2019gromov,xu2019scalable,chowdhury2021quantized} and kernel-based classification \cite{PeyreGW,OTstructureddata}.
Beyond pairwise comparison, a key strength of GW lies in its ability to \emph{learn latent structures}. Instead of comparing two fixed objects, one can seek a simplified representation that best approximates an observed graph under a GW-based discrepancy. This leads to a family of formulations in which the target structure, the associated weights, or both are learned jointly with the coupling. Depending on which components are fixed or optimized, this general principle gives rise to a wide range of applications on graphs, including partitioning \cite{xu2019scalable,chowdhury2021generalized,srGW,OTclusteringattributed}, graphon estimation \cite{xu2021learning} and representation learning (e.g., graph coarsening\cite{chen2023gromov}, dictionary learning \cite{xu2020gromov,vincent2021online,liu2022robust,zeng2023generative}, graph neural networks \cite{vincent2022template,chu2023wasserstein,qian2024reimagining,krzakala2026the}, or dimensionality reduction \cite{van2025distributional,clark2025generalized}). In particular, learning a factored structure together with node assignments closely parallels the objectives of SBMs, where one aims to infer both cluster memberships and connectivity patterns.

\paragraph{Contributions} In the present work, we inspect some interesting links between semi-relaxed Gromov Wasserstein~\citep{srGW}, a divergence derived from the GW distance and the stochastic block model. More precisely, we show that one can infer the parameters and cluster assignments of SBMs leveraging srGW: a new family of estimators for SBMs is introduced together with a method to perform automatic model selection (i.e. selecting the number of groups), based on a sparsity promoting regularization of the OT cost. 
Notably, this allows one to perform clustering and select the number of clusters in one shot, thus avoiding computationally expensive grid searches or greedy methods with not guarantee of convergence. 
We draw connections between standard variational expectation maximization in SBMs and OT-based estimation and prove the asymptotic consistency of our estimators. An algorithm to compute OT estimates and perform graph partitioning is provided. We perform numerical experiments, illustrating the interest of the approach and providing new insights about model selection for latent block models.

The paper is organized as follows: we first revise Gromov-Wasserstein OT, in Section~\ref{sec:GW}, and stochastic block modelling, in  Section~\ref{sec:SBM}. This order is chosen for pedagogical reasons as it will be clear in the following. In Section~\ref{sec:main} we present our main results. We introduce the OT estimators and state some of their properties, before detailing an algorithm to compute them, equipped with a model selection routine.   
Finally, Section~\ref{sec:exp} illustrates with numerical experiments on simulated data the main appeals and limitations of our estimators.
\paragraph{Notation} In the following, $\mathbf{1}_N$ denotes a column vector of $N$ ones. We denote probability simplex with $N$ bins by $\Delta_N := \{\bh \in \R_+^N | \bh^\top \mathbf{1}_N = 1\}$. Given a vector $\mathbf{v} \in \mathbb{R}^N_+$ we denote by $\mathcal{U}_K(\mathbf{v})$ the following set $\lbrace \bT \in \mathbb{R}^{N \times K}_+ | \bT \mathbf{1}_N = \mathbf{v}\rbrace$.
The shorthand notation $\prod_{j>i} \prod_{k,l} := \prod_{i=1}^N\prod_{j>i}^N \prod_{k=1}^K\prod_{l=1}^K$ is adopted. Similarly for $\sum_{j>i}\sum_{k,l}$.
Unless differently stated, it is assumed that graphs are undirected with no self loops.

\section{Gromov-Wasserstein}\label{sec:GW}

\paragraph{Gromov-Wasserstein Discrepancy for graphs} Let $\bA \in \mathbb{R}^{N \times N}$ be the adjacency matrix, possibly weighted, of a graph with $N$ nodes, endowed with a probability distribution $\bh \in \Delta_N$. In practice, $\bh$ is often taken to be uniform ($\bh = \frac{1}{N}\mathbf{1}_N$), or derived from structural properties of the graph, such as normalized node degrees \cite{xu2019scalable}. Similarly, let $\bTheta \in \mathbb{R}^{K \times K}$ denote a target structure (e.g., a connectivity matrix) with associated weights $\balpha \in \Delta_K$. The Gromov-Wasserstein (GW) discrepancy between the source graph $(\bA, \bh)$ and the target graph $(\bTheta, \balpha)$ seeks a transport matrix, i.e a soft correspondence between nodes, that best preserves pairwise relationships. It is defined as 
\begin{equation}\label{eq:GW_def}
\GW_{\ell}(\bA,\bh, \bTheta, \balpha) := 
\min_{\bT \in \Pi(\bh, \balpha)} \sum_{i,j,k,l} \ell(A_{ij}, \Theta_{kl}) \, T_{ik} T_{jl},
\end{equation}
where $\Pi(\bh, \balpha) = \{ \bT \in \mathbb{R}_+^{N \times K}|~\bT \mathbf{1}_K = \bh, ~\bT^\top \mathbf{1}_N = \balpha \}$
denotes the set of admissible couplings with prescribed source and target marginals. The function $\ell$ measures the discrepancy between edge weights. A common choice is the quadratic loss $\ell_2(a,b) = (a-b)^2$, which leads to the classical GW formulation. More general losses, which can be decomposed as
$\ell(a,b) = f_1(a) + f_2(b) - h_1(a) h_2(b)$ \cite{PeyreGW}, like the Kullback-Leibler or binary cross-entropy losses, have been empirically studied \cite{van2025distributional}. These composite losses lead to quadratic programs in $\bT$, which are in general non-convex and computationally challenging. Nevertheless, GW with any $\ell_p$ loss enjoys strong theoretical properties: for arbitrary adjacency matrices, GW defines a metric over the space of \emph{weakly isomorphic graphs} \cite{chowdhury2019gromov}. This notion extends classical graph isomorphism (i.e., invariance under node permutations) by allowing mass to be split across nodes. In particular, two graphs are equivalent in the GW sense if one can be obtained from the other by duplicating nodes and redistributing their associated mass while preserving connectivity patterns. Notably, these properties motivated the application of GW for graph partitioning by matching an input graph, with connectivity matrix $\bA$ or an associated heat kernel,  to an ideal target graph $(\bI_K, \balpha)$, corresponding to perfectly disconnected clusters with proportions $\balpha$, derived for instance from power laws over normalized degrees of $\bA$ \cite{xu2019scalable, chowdhury2021generalized}.

\paragraph{GW minimal estimators}\label{GWminestimator} 
To improve graph partitioning via a matching to an idealized graph $(\bI_K, \balpha)$, \cite{srGW} proposed a form of GW minimal estimator that consists in optimizing the cluster proportions estimated by $\balpha$ in the GW problem \ref{eq:GW_def}. To this end, authors observed that jointly optimizing the GW loss over $\balpha \in \Sigma_K$ and  $\bT \in \mathcal{U}(\bh, \balpha)$ can be replaced by a single simpler optimization over $\bT \in \mathcal{U}_K(\bh):=\{\bT \in \mathbb{R}_+^{N \times K} |~\bT \mathbf{1}_K = \bh\}$, leading to the definition of the semi-relaxed Gromov-Wasserstein divergence \cite{srGW}:
\begin{equation}
\srGW_{\ell}(\bA, \bh, \bTheta) := \min_{\bT \in \mathcal{U}_K(\bh)} \sum_{i,j,k,l} \ell(A_{ij}, \Theta_{kl}) T_{ik} T_{jl} \quad = \min_{\balpha \in \Delta_K} \GW_{\ell}(\bA,\bh, \bTheta, \balpha). 
\label{eq:srGW_def}
\end{equation} 
A natural extension to better factor the input graph consists in additionally learning the target structure, solving for
\begin{equation}\label{eq:GW_barycenter}
\min_{\bTheta \in \R^{K \times K}} \srGW_\ell(\bA, \bh, \bTheta) \Longleftrightarrow \min_{\bTheta \in \R^{K\times K}, \balpha \in \Delta_K} \GW_\ell(\bA, \bh, \bTheta, \balpha).
\end{equation}
This comprehensive GW minimal estimator, also known as srGW barycenter in the literature, has relevant theoretical guarantees for graph partitioning when $\ell=\ell_2$. Specifically,
\citet{van2025distributional} established that, under conditional positive or negative definiteness of the input structure $\bA$, there exists (hard-clustering) membership matrices which are solutions. Moreover, when $\bA$ is positive definite, the problem is equivalent to the weighted kernel $k$-means \cite{dhillon2004kernel}, which is well-suited for graph coarsening and partitioning due to its spectral preservation properties \cite{chen2023gromov}. However, such guarantees do not extend to general adjacency matrices, including those generated from random graph models like SBMs.

\section{Stochastic Block Models}\label{sec:SBM}
Given a graph with $N$ nodes and its binary adjacency matrix $\bA \in \{0,1\}^{N\times N}$,
in the Stochastic Block Model~\citep[SBM,][]{Daudin} it is assumed that the nodes are split into
$K$ clusters with proportions $\balpha := (\alpha_1, \dots, \alpha_K)^T$, where $\balpha \in \Delta_K$ and $\alpha_k$ denotes the probability that a node belongs to the $k$-th cluster. Furthermore, $K$ doesn't depend on $N$.
More precisely, a latent vector $Z:=(Z_1,\dots, Z_N)^T$ is introduced such that
\begin{equation}
  Z_i \overset{\text{i.i.d}}{\sim} \text{Cat}(\balpha)
\end{equation}
denotes the cluster of the i-th node and $\mathbb{P}(Z_{ik} = 1) = \alpha_k$\footnote{As it is common in the literature we interchangeably denote by $Z_i$ a random number in $\llbracket K \rrbracket$ or a binary random vector in $\{ 0,1\}^K$. By the way, both $Z_i = k$ and $Z_{ik}=1$ mean that the $i$-th node is in the $k$-th cluster.}. The main assumption in SBM is
\begin{equation}
A_{ij} | (Z_i, Z_j) = (k,l) \overset{\text{ind.}}{\sim} \mathcal{B}(\Theta_{kl}),     
\label{eq:SBM_main}
\end{equation}
where $\mathcal{B}(\cdot)$ denotes the Bernoulli probability distribution and $\bTheta \in [0,1]^{K\times K}$ is the  connectivity matrix. As it can be seen, the probability that two nodes connect with each other only depends on the clusters they are in. Several extensions of SBM have been introduced to deal with valued graphs~\citep[see][and references therein]{lee2019review}. In those cases the entries of $\bA$ can be integer, categorical or real values and the Bernoulli distribution in Eq.~\eqref{eq:SBM_main} is replaced by an appropriate probability density or mass function $p(\cdot |\Theta_{kl})$, possibly zero inflated, i.e. $p(\cdot, \Theta_{kl}) = \rho \mu(\cdot|\Theta_{kl}) + (1-\rho)\delta_{\{0\}}(\cdot)$, with $\rho \in [0,1]$ and $\mu$ denoting another probability density or mass function. Figure~\ref{fig1:Bsbm} displays an example of a graph generated by a Bernoulli SBM with five communities and the connectivity matrix of the corresponding model. The likelihood of the adjacency matrix $\bA$ under $\text{SBM}(\bTheta, \balpha)$ reads
\begin{equation}
    p(\bA|\bTheta, \balpha) = \sum_{Z \in \mathcal{Z}_{N,K}}\left(\prod_{j>i}\prod_{k,l}p(A_{ij}| \Theta_{kl})^{Z_{ik}Z_{jl}} \prod_i \prod_k \alpha_k^{Z_{ik}}\right)
    \label{eq:SBM_likeli}
\end{equation}
where the first sum on the r.h.s of the equality is taken over all possible assignments of $N$ nodes in $K$ clusters.

\paragraph{Identifiability} 
As for other latent variable models, the parameters of SBM are identifiable up to label switching. That is, if $P_\sigma$ is any permutation matrix, then it can be shown that $\text{SBM}(\bTheta, \balpha)$ and $\text{SBM}(P_\sigma \bTheta P_\sigma^T, P_\sigma \balpha)$ yeld to the same likelihood in Eq.~\eqref{eq:SBM_likeli} for any $\bA$, binary or not. 
Indeed, the identifiability of the SBM parameters is slightly more subtle and label switching is not the only issue one has to face to enforce identifiability. The reader is referred to~\cite{ConsistencySBM,mariadassou2015convergence} for an in depth treatment of this topic. However, a general result proved in~\cite{ConsistencySBM} is that the vector $\bTheta\balpha \in \mathbb{R}^K$ should have distinct coordinates in order for the model parameters to be identifiable. That result is intimately connected with the following assumptions that we need in order to state our results.

\textbf{Assumptions.}\label{Ass:ass}
We assume that
    \begin{enumerate}
      \item[\textbf{A1.}] $\forall k\neq k' \quad \exists l \in \llbracket K \rrbracket$ such that $\Theta_{kl} \neq \Theta_{k'l}$ or $\Theta_{l k} \neq \Theta_{l k'}$
        \item[\textbf{A2.}] $\exists \zeta \in ]0, 1[ \quad \forall k,l \in \llbracket K \rrbracket^2, \ \Theta_{kl} \in [\zeta, 1-\zeta]$
        \item[$\textbf{A3.}$] $\exists \gamma \in ]0, 1/K[ $ such that $\alpha_k \in [\gamma, 1-\gamma]$
    \end{enumerate}
The first assumption states there are no two identical rows/columns in $\bTheta$. The second assumption is technical, allowing us to bound $\log \Theta_{kl}$ in the binary case (i.e. Bernoulli SBM) and the last assumption forbids empty clusters. The numbers $\zeta$ and $\gamma$ do not depend on the number of nodes $N$.



\begin{figure}[t!]
    \centering
    \includegraphics[width=0.4\linewidth]{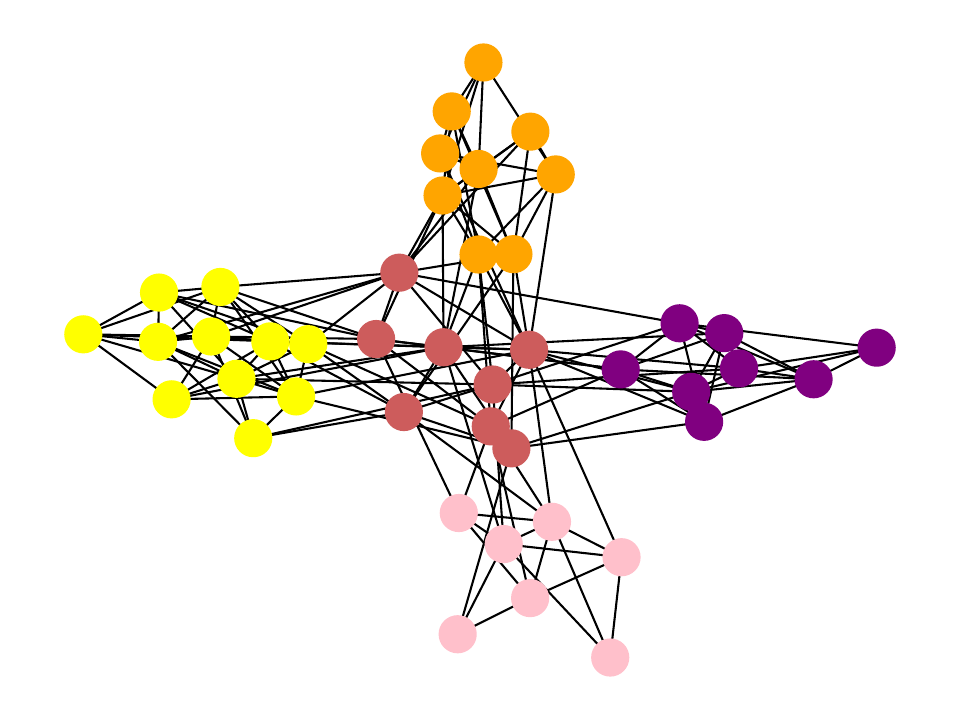}
     \includegraphics[width=0.4\linewidth]{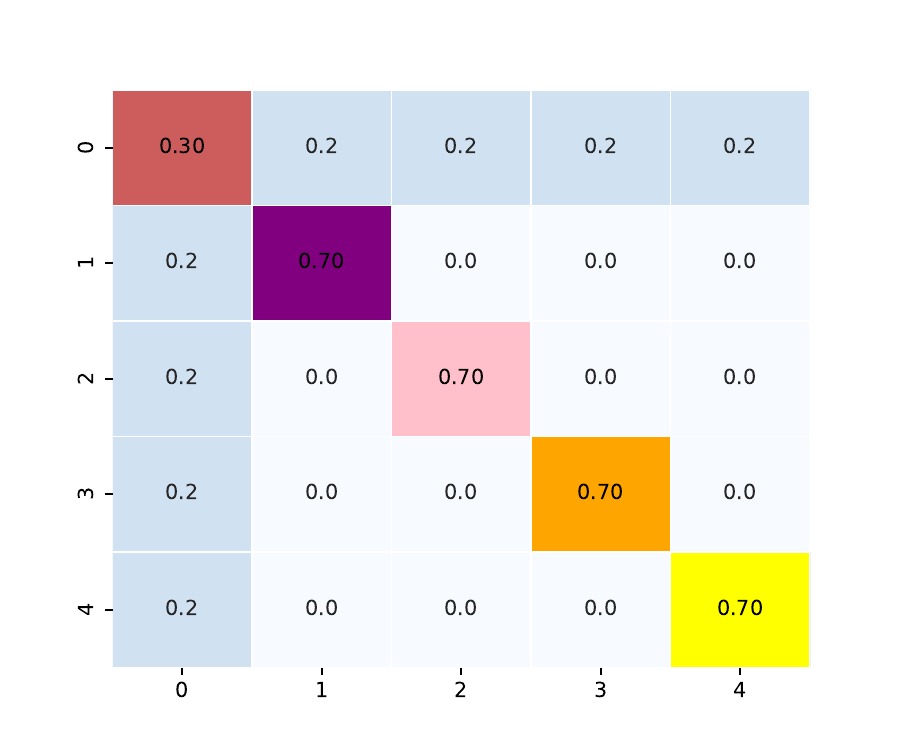}
    \caption{Left : A graph sampled from a Bernoulli SBM with five communities. Right : The connectivity matrix of the associated model.}
    \label{fig1:Bsbm}
\end{figure}

\subsection{Variational Inference}\label{subsec:SBM_VI}

In order to estimate the model parameters $(\bTheta,\balpha)$, one might seek to directly maximize the log-likelihood of the observed data $\log p(\bA|\bTheta, \balpha)$. However, the marginalization in Eq.~\eqref{eq:SBM_likeli} involves summing over $K^N$ possible label assignments. Because of the combinatorial nature of the problem, that quantity becomes very soon not tractable (neither its logarithm). Although several alternative inference strategies have been introduced in the literature, here we focus on variational Expectation Maximization (EM), as described in~\cite{Daudin}. 
First, the Jensen's inequality can be adopted in order to lower bound the log-likelihood of the observed data
\begin{equation}
\log p(\bA | \bTheta, \balpha) = \log \E_{Z \sim q}\left[{\frac{p(\bA, Z | \bTheta, \balpha)}{q(Z)}}\right] \geqslant \E_{Z \sim q}\left[{\log{\frac{p(\bA, Z | \bTheta, \balpha)}{q(Z)}}}\right]:=\mathcal{L}(q, \bTheta, \balpha),
\end{equation}
where $q(\cdot)$ is any distribution over $Z$ (with a proper support) and the last term on the right hand side is known as the Evidence Lower Bound (ELBO). The above inequality turns into in an equality iff $q(\cdot)$ is the posterior distribution of $Z$ given $\bA$ and the model parameters, however that quantity is not tractable in SBM~\citep{Daudin}. A common approach consists into adopting a mean-field approximation 
\begin{equation}
    q(Z) := \prod_{i=1}^N \prod_{k=1}^K{\tau_{ik}^{Z_{ik}}},
    \label{eq:variational}
\end{equation}
where the probability $\tau_{ik}$ is the $(i,k)$-th  entry of $\btau \in \mathcal{U}_K(\mathbf{1}_N)$, unknown and to be estimated. 
Then
the ELBO reads as follows
\begin{equation}
\begin{split}
 \mathcal{L}(q,\bTheta,\balpha) &= \frac{1}{2}\sum_{i\neq j}\sum_{k,l}\tau_{ik}\tau_{jl}\log p(A_{ij} | \Theta_{kl})
- \sum_{i,k}\tau_{ik}\log{\tau_{ik}} + \sum_{k} \left(\sum_i \tau_{ik}\right)\log{\alpha_{k}}, 
\end{split}
\label{eq:ELBO}
\end{equation}
where the factor 1/2 comes from the fact that our graph is undirected, with no self loops ($j \neq i$). 
The ELBO is typically maximized alternatively by (E Step) introducing Lagrange multipliers to account for the constraints on the variable $\boldsymbol{\tau}$ and look at first order conditions, leading to a fixed point algorithm; (M  Step) updating $(\bTheta, \balpha)$ using closed-form solutions provided by first order optimality conditions. 

\paragraph{ELBO as an entropic GW problem} 
It can easily be checked that the first order condition for $\balpha$ leads to the M update $\hat{\balpha}:=(N^{-1}\sum_{i}\tau_{ik})_{k \in \llbracket K \rrbracket}$.
By injecting that closed-form solution into Eq.~\eqref{eq:ELBO} and substituting $\btau = N\bT$, with $\bT \in \mathcal{U}_K(N^{-1}\mathbf{1}_N)$, the ELBO maximization is equivalent to the following minimization problem
\begin{equation}
\min_{\bT \in \mathcal{U}_K(\mathbf{1}_N/N), \bTheta } \left(-\sum_{i,j,k,l}  \log p(A_{ij}| \Theta_{kl}) T_{ik} T_{jl} - \frac{2}{N} [H(\bT) - H(\bT^\top \mathbf{1}_N) ]\right),
\label{eq:eq_ELBO_srGW}
\end{equation} 
where $H(\bT) = -\sum_{ik} T_{ik} \log T_{ik}$ denotes the entropy of $\bT$, $H(\bT^\top \mathbf{1}_N)$ the one of its optimized marginal and, since the diagonal of $\bA$ only contains zero entries, we assume by convention that $\forall i,k,l, \log p(A_{ii} | \Theta_{kl})= 0$.
When looking at the fist term inside parentheses, one clearly recognizes the objective of the srGW problem in~\eqref{eq:srGW_def}, with inner loss $\ell(\cdot,\cdot) = - \log p(\cdot | \cdot)$. In the above problem, however, an additional entropic regularization term appears, having important effects on the solutions for a fixed $N$. In the absence of regularization, GW problems admit sparse solutions due to their constraints on polytopes. In contrast, the standard practice in OT of maximizing $H(\bT)$ promotes diffuse solutions that improve robustness to noise and optimization smoothness \cite{COT,PeyreGW}. Conversely, minimizing $H(\bT^\top \mathbf{1}_N)$ encourages estimated cluster proportions to converge towards an extremity of the probability simplex \cite{van2024graph}. However, both entropy terms act as barrier functions \cite{benamou2015iterative}: they prevent the entries of $\bT$ from vanishing, and therefore yield dense solutions. As a consequence, they hinder the ability of the model to perform intrinsic model selection, i.e., to deactivate clusters by assigning exactly zero mass to some columns of $\bT$.

\section{Gromov-Wasserstein based inference for SBM}\label{sec:main}

For several distributions $p(\cdot|\bTheta)$, with learnable $\bTheta \in \R^{K\times K}$, and true $K$ \emph{known}, we show in the next sections that consistent estimates of the SBM parameters can be obtained by solving problem~\eqref{eq:eq_ELBO_srGW} \emph{without} the relative entropy term. However removing entropy is not entirely sufficient to perform model selection (i.e. estimate $K$) and we'll explain why. 
The sparsity promoting penalty introduced in~\cite{srGW}
can be adopted for such a scope.
\subsection{Novel GW-based estimators}

We consider in the following an adjacency matrix $\bA \in \R^{N\times N}$ generated by a SBM with underlying probability density or mass function $p(\cdot | \Theta_{kl})$. Let $\bT\in \mathcal{U}_K(\boldsymbol{1}_N/N)$ and $\bTheta \in \R^{K\times K}$ denote an admissible transport plan and a connectivity matrix, respectively. We consider the GW losses parameterized by an inner loss $\ell$, either ``aligned'' with the generative model, i.e. such that $\ell(A_{ij},\Theta_{kl}) = - \log p(A_{ij} | \Theta_{kl}) $, or taken as any Bregman divergence $D_U(A_{ij} | \Theta_{kl})$ induced by the strictly convex and continuously differentiable function $U$ defined on an appropriated convex set $\mathcal{C}\subset \R$. These two types of GW losses are denoted by
\begin{equation}
L_p(\bT, \bTheta) = {\sum_{ijkl}{ -\log{p(A_{ij} | \Theta_{kl})} T_{ik}T_{jl}}}~,~ \text{and}~,~L_U(\bT, \bTheta) = {\sum_{ijkl}D_U(A_{ij}, \Theta_{kl}) T_{ik}T_{jl}},
\label{global-loss}
\end{equation}
respectively, and associated with  the corresponding semi-relaxed GW divergence
\begin{equation}
    \text{srGW}_{\bullet}(\bA, \bTheta) = \underset{\bT\in \mathcal{U}_K(\boldsymbol{1}_N/N)}{\text{min}}{L_{\bullet}(\bT, \bTheta)}.
    \label{srGW}
\end{equation}
Then, the estimators whose properties we investigate are
\begin{equation}
    \widehat{\bT}, \widehat{\bTheta} \in \underset{\bT\in \mathcal{U}_K(\boldsymbol{1}_N/N), \bTheta}{\text{argmin }}{L_\bullet(\bT, \bTheta)}.
\label{sbm_estimators}
\end{equation}
We formalized the above two families of GW losses since the results we prove in the following might be valid either for one loss type or the other. However, note that the intersection of the two families is not empty: for instance, a Bernoulli SBM coupled with a Bernoulli negative log-likelihood  inner loss $\ell$ is in line with both formulation.

\paragraph{Remarks and relation with prior works} The above estimators fall in the framework of GW minimal estimator discussed in Section~\ref{sec:GW} and can be viewed as M-estimators induced by a modification of the ELBO by removing the relative entropy term or, alternatively, as a continuous relaxation of the maximum likelihood estimator in the framework of~\cite{GAUCHERMLE}.
Moreover, in case of Bernoulli SBM, with inner loss $\ell(\cdot,\cdot) = (\cdot - \cdot)^2$ and \emph{fixed} $\bTheta$, the estimator $\widehat{\bT}$ in Eq.~\eqref{sbm_estimators} reduces to the one used in~\cite{srGW,OTclusteringattributed} for community detection purposes.
Furthermore, as detailed in Appendix~\ref{sec:appendix_various_losses}, for a \emph{fixed} $\bT$, either minimizing $L_U$ w.r.t $\bTheta$ with $\ell$ equal to the square loss in case of a Bernoulli SBM or, alternatively, minimizing $L_p$ w.r.t $\bTheta$ for the Bernoulli, Gaussian or Poisson models leads to the estimator  $\widehat{\Theta}_{kl} := \sum_{ij}{T_{ik}T_{jl}A_{ij}} / \sum_{ij}{T_{ik}T_{jl}}$. Interestingly, this estimator is the same one that appears in the M step of the variational E-M algorithm detailed in~\cite{Daudin}.

\subsection{Theoretical analysis and results of consistency}

 In the first part of this section we assume that the \emph{actual} connectivity $\bTheta^*$ used to generate the data via SBM is known. We show that, in this scenario, the estimated transport plan $\widehat{\bT}$ can consistently estimate the \emph{actual} cluster assignments $Z^*$. To keep the notation uncluttered, we remove the star $^*$ and intend $\bTheta = \bTheta^*$ and $Z = Z^*$, in the following.

\subsubsection{Known connectivity matrix}

 We introduce the following conditional expected loss 
\begin{equation*}
 \mathcal{L}_{\bullet}(\bT, \bTheta) := \E\left[L_{\bullet}(\bT, \bTheta) | Z \right],
\end{equation*}
where, as previously said, $L_{\bullet}$ can be $L_p$ or $L_U$ (same for $\mathcal{L}_{\bullet}$) and state the next

\begin{Lemma}
    Given a Bernoulli SBM and an inner loss taken as a Bregman divergence $D_U$, under mild assumptions on $U$ 
    the transport plan $\bT^*$ minimizing $\bT \mapsto \mathcal{L}_U(\bT, \bTheta)$ is of the form $\displaystyle \bT^* = \frac{1}{N}ZP_\sigma$ where $P_\sigma$ is any permutation matrix such that $\bTheta = P_\sigma \bTheta P_\sigma^\top$.
    \label{prop:expectedloss}
\end{Lemma}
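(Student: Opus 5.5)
We compute the conditional expectation explicitly. Given $Z$, for $i\neq j$ with $Z_i=k'$, $Z_j=l'$, the entry $A_{ij}$ is Bernoulli$(\Theta_{k'l'})$. By the defining property of Bregman divergences, for any $b$,
\begin{equation*}
\E[D_U(A_{ij},b)\mid Z] = D_U(\Theta_{k'l'},b) + \big(\E[U(A_{ij})\mid Z] - U(\Theta_{k'l'})\big).
\end{equation*}
Indeed $D_U(a,b)=U(a)-U(b)-U'(b)(a-b)$ is affine in $a$ apart from $U(a)$. The second term does not depend on $b$. Summed against $T_{ik}T_{jl}$, and using $\sum_{k,l}T_{ik}T_{jl}=1/N^2$, it contributes a constant independent of $\bT$. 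This is where the constraint $\bT\in\mathcal{U}_K(\mathbf{1}_N/N)$ is used.

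The diagonal terms $i=j$ need separate care. I would treat them by the convention that $A_{ii}=0$ is deterministic. Their contribution is $\sum_{i}\sum_{k,l}D_U(0,\Theta_{kl})T_{ik}T_{il}$, which is $O(1/N)$ relative to the main term. Under the mild assumptions on $U$, the cleanest choice is to exclude $i=j$ from the sum, consistent with the no-self-loop convention. I would state this exclusion explicitly as one of the assumptions.

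Up to an additive constant, the objective then becomes
\begin{equation*}
\mathcal{L}_U(\bT,\bTheta) = \sum_{i,j}\sum_{k,l} D_U(\Theta_{Z_iZ_j},\Theta_{kl})\,T_{ik}T_{jl} + C.
\end{equation*}
Every term is nonnegative, and $D_U(x,y)=0$ iff $x=y$ by strict convexity. So the minimum value $C$ is attained exactly when $T_{ik}T_{jl}>0$ forces $\Theta_{kl}=\Theta_{Z_iZ_j}$ for all pairs $(i,j)$ in the sum. Plugging in $\bT=\frac1N ZP_\sigma$ with $\bTheta=P_\sigma\bTheta P_\sigma^\top$ gives zero, so such plans are minimizers and the minimal value is exactly $C$.

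For the converse, which is the main step, let $\bT$ be a minimizer. Define for each node $i$ its support set $S_i=\{k:T_{ik}>0\}$, which is nonempty because the row sums equal $1/N$. The zero-loss condition says: for all $i,j$ and all $k\in S_i$, $l\in S_j$, we have $\Theta_{kl}=\Theta_{Z_iZ_j}$.

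We now use A3 and the asymptotic regime to ensure every true cluster contains at least two nodes, say $i\neq i'$ with $Z_i=Z_{i'}$. Fix a node $i$ with $Z_i=a$ and some $k\in S_i$. For every $b$, pick a node $j\ne i$ with $Z_j=b$ and $l\in S_j$. Then $\Theta_{kl}=\Theta_{ab}$.

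This yields a map $\pi:a\mapsto k$. We must show that it is well defined and injective, and that it is in fact a permutation with $\Theta_{\pi(a)\pi(b)}=\Theta_{ab}$. Suppose $k,k'\in S_i\cup S_{i'}$ for nodes in the same cluster $a$. Then rows $k$ and $k'$ of $\bTheta$ agree on the columns in the image of $\pi$, and the same holds for columns by symmetry.

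If $\pi$ is surjective onto the columns used, A1 forces $k=k'$. This gives hard assignments, meaning each row of $\bT$ is $\frac1N e_{\pi(Z_i)}$. Injectivity follows similarly: $\pi(a)=\pi(a')$ would make rows $a$ and $a'$ of $\bTheta$ equal, contradicting A1. So $\pi$ is injective on the $K$ clusters, all of which are nonempty. Injectivity from $\llbracket K\rrbracket$ to $\llbracket K\rrbracket$ gives a permutation, which also settles surjectivity and closes the circularity in the previous argument.

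The invariance $\Theta_{\pi(a)\pi(b)}=\Theta_{ab}$ is exactly $\bTheta=P_\sigma\bTheta P_\sigma^\top$, and $\bT=\frac1N ZP_\sigma$. The delicate point is ordering this argument to avoid circularity. I would first prove injectivity of the induced relation using A1, then conclude bijectivity, then hardness of the assignments.
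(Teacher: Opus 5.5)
Your proposal is correct and follows the paper's route: the Bregman identity together with $\E[A_{ij}\mid Z]=\Theta_{Z_iZ_j}$ reduces $\mathcal{L}_U$, up to a constant, to $\sum D_U(\Theta_{Z_iZ_j},\Theta_{kl})T_{ik}T_{jl}$, and your injectivity step is exactly the third-node argument of Lemma~\ref{lemma:minimizers} that contradicts \textbf{A1}. Two points the paper leaves implicit are filled in by you: the counting argument (pairwise disjoint nonempty supports for the $K$ true clusters must be singletons partitioning $\llbracket K\rrbracket$, which the paper calls ``easy to check''), and the explicit exclusion of the diagonal $i=j$ (which requires at least two nodes per cluster rather than mere surjectivity of $Z$).
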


\begin{proof} See Appendix~\ref{proof:Lemma1}.
\end{proof}

\begin{remark}
Being \emph{fixed} in the previous lemma, $\bTheta$ is usually \emph{not} invariant under the action of permutations $P_{\sigma}$ other then the identity matrix. In those cases the reader can safely consider $P_{\sigma} = I_K$ in the previous lemma. However, exceptions exist. Consider for instance the affiliation SBM in which $\bTheta = \eta \mathbf{1}_K\mathbf{1}_K^T + \delta I_K$, where $0<\eta, \delta<1$. In that case, any permutation of rows/columns of $\bTheta$ leaves it invariant.   
\end{remark}

\begin{remark}
To ease the exposition, the previous lemma is formulated under the same assumptions of the next theorem. However, an alternative formulation valid on SBMs other than Bernoulli (exponential family and zero-inflated distributions) and when replacing $\mathcal{L}_U$ with $\mathcal{L}_p$ is stated and proved in Appendix~\ref{proof:alt_Lemma1}). 
\end{remark}
The next theorem states that we can recover the true labels as $N$ goes to $+\infty$ when we know the true connectivity matrix $\bTheta$. The proof is postponed to Appendix~\ref{proof:Th1} and relies on both Lemma~\ref{prop:expectedloss} and tools from high dimensional statistics, in particular concentration results for symmetric random matrices with subgaussian entries. 

\begin{theorem}
    Under a Bernoulli SBM model and with an inner loss taken as a Bregman divergence, let $\widehat{\bT} := \underset{\bT\in \mathcal{U}_K(1/N)}{\text{argmin }} L_U(\bT, \bTheta)$ and      $\bT^*$ defined as in Lemma~\ref{prop:expectedloss}. Then, under the same assumptions as in Lemma~\ref{prop:expectedloss} on $U$, we have
    \begin{equation}
        \| \widehat{\bT} - \bT^*\| \underset{N\to +\infty}{\overset{\P}{\xrightarrow{\hspace{0.9cm}}}} 0. 
    \end{equation}
    Moreover, $\mathcal{L}(\widehat{\bT}, \bTheta) - \mathcal{L}(\bT^*, \bTheta) \underset{N\to +\infty}{\overset{a.s}{\xrightarrow{\hspace{0.9cm}}}} 0$ with rate of convergence $\mathcal{O}(1/\sqrt{N})$.
    \label{labelrecovery}
\end{theorem}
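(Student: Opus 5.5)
The strategy is the standard M-estimation argument: uniform concentration of the random loss around its conditional expectation, plus a quantitative identification (margin) property of the expected loss around its minimisers given by Lemma~\ref{prop:expectedloss}.

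\textbf{Step 1: decomposition.} Since $D_U(a,b)=U(a)-U(b)-U'(b)(a-b)$, we can write
\[
L_U(\bT,\bTheta)=\sum_{ij}U(A_{ij})\Big(\sum_kT_{ik}\Big)\Big(\sum_lT_{jl}\Big)+\sum_{ijkl}\big(U'(\Theta_{kl})\Theta_{kl}-U(\Theta_{kl})\big)T_{ik}T_{jl}-\sum_{ijkl}U'(\Theta_{kl})A_{ij}T_{ik}T_{jl}.
\]
For $\bT\in\mathcal{U}_K(\mathbf 1_N/N)$ the first term does not depend on $\bT$, and the second is deterministic. Hence
\[
L_U-\mathcal{L}_U=-\sum_{k,l}U'(\Theta_{kl})\,\bigl(\bT^\top(\bA-\E[\bA|Z])\bT\bigr)_{kl}+c,
\]
where the constant $c$ does not depend on $\bT$. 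Assumption A2 together with continuity of $U'$ makes $|U'(\Theta_{kl})|$ bounded, and $\|\bT\|_F\le \sqrt{\sum_i(\sum_k T_{ik})^2}=1/\sqrt N$. Therefore
\[
\sup_{\bT}\big|L_U(\bT)-\mathcal{L}_U(\bT)-c\big|\le C\,K^2\,\|\bA-\E[\bA|Z]\|_{op}\,\|\bT\|_F^2\le \frac{C K^2\,\|\bA-\E[\bA|Z]\|_{op}}{N}.
\]

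\textbf{Step 2: concentration.} Conditionally on $Z$, the matrix $\bA-\E[\bA|Z]$ is symmetric with independent, centred, bounded (hence subgaussian) entries above the diagonal. The standard bound then gives $\|\bA-\E[\bA|Z]\|_{op}\le C\sqrt N$ with probability at least $1-e^{-cN}$. By Borel--Cantelli the uniform deviation is almost surely $\mathcal O(1/\sqrt N)$. The usual sandwich
\[
\mathcal{L}(\widehat\bT)-\mathcal{L}(\bT^*)\le \big[\mathcal{L}(\widehat\bT)-L(\widehat\bT)\big]+\big[L(\bT^*)-\mathcal{L}(\bT^*)\big]\le 2\sup_{\bT}\big|L-\mathcal{L}-c\big|,
\]
using $L(\widehat\bT)\le L(\bT^*)$, yields the second claim with rate $\mathcal O(1/\sqrt N)$ almost surely.

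\textbf{Step 3: margin and the main obstacle.} The plan is to convert the gap in expected loss into a distance bound. Write $\mathcal{L}_U(\bT)-\mathcal{L}_U(\bT^*)=\sum_{ijkl}D_U(\Theta_{Z_iZ_j},\Theta_{kl})T_{ik}T_{jl}$ up to the diagonal $i=j$ terms, which are $\mathcal O(1/N)$. This holds because $\E[D_U(A_{ij},\Theta_{kl})|Z]-\E[D_U(A_{ij},\Theta_{Z_iZ_j})|Z]=D_U(\Theta_{Z_iZ_j},\Theta_{kl})$, a Bregman "bias–variance" identity.

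Next, aggregate by true cluster: set $\bM=N\,Z^\top\bT\in\R^{K\times K}$, whose row $m$ is the mass of true cluster $m$ sent to each $k$. The quantity above becomes a continuous function $F(\bM/N)$ defined on a compact set. By Lemma~\ref{prop:expectedloss}, together with A1 (distinct rows, so the Bregman terms are strictly positive on mismatch, using strict convexity of $U$) and A3 (non-empty clusters, whose proportions stay $\ge\gamma/2$ with high probability by the law of large numbers), $F$ vanishes exactly on the set of $\mathrm{diag}(\hat\balpha)P_\sigma$.

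Compactness and continuity then give, for each $\varepsilon>0$, a $\delta>0$ such that $F\ge\delta$ whenever the aggregated plan is $\varepsilon$-far from that zero set. The hardest part is that this controls only the aggregate $Z^\top\bT$, not $\bT$ row by row. However, $F$ is in fact a sum of nonnegative terms indexed by individual nodes weighted by the cluster profiles. The approach is to show that the fraction of nodes $i$ whose row $NT_{i\cdot}$ deviates from $e_{\sigma(Z_i)}$ is controlled by $F$: fix $j$ ranging over a whole true cluster and use the positivity of $D_U$ on distinct rows of $\bTheta$ given by A1. This gives $\|\widehat\bT-\bT^*\|\to0$ in probability, in the normalised norm where $\|\bT^*\|$ is of order one. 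The permutation $P_\sigma$ is selected as the nearest symmetry of $\bTheta$.
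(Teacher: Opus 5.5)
Your Steps 1--2 and the aggregation part of Step 3 are the paper's proof. The paper also bounds $\sup_{\bT}|L_U-\mathcal{L}_U|$ by $\|\bA-\E[\bA|Z]\|_{\mathrm{op}}/N$ and uses the same sandwich with $L_U(\widehat{\bT},\bTheta)\le L_U(\bT^*,\bTheta)$. It then rewrites the excess risk via the Bregman identity as a quadratic form in the aggregate: its $\pi_N(\bT)=\mathrm{diag}(\widehat{\balpha})^{-1}Z^\top\bT$ and $f_N\circ\pi_N$, with an $\mathcal{O}(1/N)$ diagonal correction. Finally it gets a margin by compactness.

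In places you are more careful than the paper:
\begin{itemize}
\item The $U(A_{ij})$ term is $\bT$-independent and cancels in the sandwich, so $\boldsymbol{M}_U$ need not be controlled.
\item Taking $t=\sqrt N$ plus Borel--Cantelli gives the almost-sure $\mathcal{O}(1/\sqrt N)$ rate directly.
\item Measuring distance to the nearest symmetry $P_\sigma$ is genuinely needed. The paper's bound $\inf_{\|B-I_K\|_1\ge\delta}f(B)\ge\varepsilon$ fails when $\bTheta$ has a nontrivial symmetry, since then $f(P_\sigma)=0$.
\item You are right that the norm must be normalised. In Frobenius norm the claim is trivial because $\|\bT\|_F\le 1/\sqrt N$ on $\mathcal{U}_K(\mathbf{1}_N/N)$; the paper uses the entrywise $\ell_1$ norm.
\end{itemize}

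The gap is the step you call the main obstacle. You only announce a node-by-node argument (``fix $j$ ranging over a whole true cluster\ldots'') without carrying it out, so as written the first claim is not proved. In fact there is no obstacle, and the paper closes it with a one-line computation. Every row of $\bT\in\mathcal{U}_K(\mathbf{1}_N/N)$ has mass $1/N$, so $T_{ik}\le 1/N$. Hence for any permutation $\sigma$,
\[
\Big\|\bT-\tfrac1N ZP_\sigma\Big\|_1=\sum_i 2\Big(\tfrac1N-T_{i\sigma(Z_i)}\Big)=2\Big(1-\sum_m (Z^\top\bT)_{m\sigma(m)}\Big)=\big\|Z^\top\bT-\mathrm{diag}(\widehat{\balpha})P_\sigma\big\|_1 .
\]
So the $\ell_1$ distance to a hard assignment is itself a linear function of the aggregate. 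The paper writes this as $\|\bT-Z/N\|_1=\sum_l\widehat\alpha_l\|\pi_N(\bT)_{l\cdot}-e_l\|_1\le\|\pi_N(\bT)-I_K\|_1$. Closeness of $Z^\top\widehat{\bT}$ to $\mathrm{diag}(\widehat{\balpha})P_\sigma$ therefore transfers immediately to $\widehat{\bT}$, with no row-by-row analysis.

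One further point to tighten: your $\delta(\varepsilon)$ must not depend on $N$, but the slice $\{B\ge 0:\ B\mathbf{1}=\widehat{\balpha}\}$ and its zero set move with $N$. Run the compactness argument on a fixed set. One option is the paper's: work on row-stochastic matrices after normalising by $\widehat{\balpha}$, and use $\widehat\alpha_k\ge\gamma/2$ on the law-of-large-numbers event to get $f_N\ge(\gamma^2/4)f$. Another is the union of slices with row sums at least $\gamma/2$. In that case note that a matrix with row sums $\widehat{\balpha}$ lying within $\varepsilon$ of some $\mathrm{diag}(a)P_\sigma$ is within $2\varepsilon$ of $\mathrm{diag}(\widehat{\balpha})P_\sigma$.
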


Therorem~\ref{labelrecovery} states that we can consistently recover the cluster assignments $Z$ of the SBM. However we made a strong assumption with the knowledge of $\bTheta^*$. In the next result we prove that the srGW barycenter estimator of $\bTheta^*$, introduced in the previous section, consistently estimates the true connectivity matrix. 

\subsubsection{Unknown connectivity matrix}

We now go back to the situation where $\bTheta$ is unknown and must be inferred from the data. We show that the estimated connectivity $\widehat{\bTheta}$ converges in probability towards the \emph{actual} connectivity matrix $\bTheta^*$ under proper assumptions regarding both the generative SBM and the inner loss $\ell(A_{ij}, \Theta_{kl})$. We first show that the log-likelihood of the observed data under SBM and the srGW loss do behave similarly.

\begin{Lemma}
\label{prop:ineq}
 With inner loss $\ell(\cdot, \cdot) = -\log p(\cdot | \cdot) $, the following inequality holds.
\begin{equation}
\underset{\bTheta}{\sup}{\left| \frac{1}{N^2} \underset{\balpha}{\sup} \log p(\bA | \bTheta, \balpha) - (- \srGW_p(\bA, \bTheta))\right|} \leqslant \frac{\log{K}}{N}
\end{equation}   
\end{Lemma}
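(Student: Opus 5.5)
The plan is to sandwich $\frac{1}{N^2}\sup_{\balpha}\log p(\bA|\bTheta,\balpha)$ between $-\srGW_p(\bA,\bTheta)-\frac{\log K}{N}$ and $-\srGW_p(\bA,\bTheta)$, with bounds that do not depend on $\bTheta$. That gives the supremum over $\bTheta$ for free.

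\emph{Notation and normalization.} For a hard assignment $Z\in\mathcal{Z}_{N,K}$, write $g(Z,\bTheta)$ for the complete-data edge term $\log\prod_{i,j,k,l}p(A_{ij}|\Theta_{kl})^{Z_{ik}Z_{jl}}$. We use the convention $\log p(A_{ii}|\cdot)=0$ and count pairs with the same multiplicity as in $L_p$. I expect this bookkeeping of ordered versus unordered pairs to be the only place where care is needed: the likelihood must be normalized consistently with $L_p$, or a factor $2$ appears. With this convention, $T=Z/N\in\mathcal{U}_K(\mathbf{1}_N/N)$ gives
\[
L_p(Z/N,\bTheta)=-\frac{1}{N^2}\,g(Z,\bTheta).
\]

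\emph{Step 1: srGW is attained at a hard assignment.} This is the structural heart of the argument. Since the diagonal terms vanish, $L_p(\bT,\bTheta)=\sum_{i\neq j}\sum_{k,l}\ell(A_{ij},\Theta_{kl})T_{ik}T_{jl}$ contains no product $T_{ik}T_{il}$ for the same row $i$. Hence $L_p$ is affine in each row $\bT_{i\cdot}$ when the other rows are fixed. The feasible set $\mathcal{U}_K(\mathbf{1}_N/N)$ is the product of the scaled simplices $\frac{1}{N}\Delta_K$, one per row. We therefore move the rows one at a time to a vertex without increasing the objective, since an affine function on a simplex is minimized at a vertex. This yields a minimizer of the form $Z/N$, so
\[
\srGW_p(\bA,\bTheta)=-\frac{1}{N^2}\max_Z g(Z,\bTheta).
\]

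\emph{Step 2: upper bound.} For any $\balpha\in\Delta_K$,
\[
\log p(\bA|\bTheta,\balpha)=\log\sum_Z e^{g(Z,\bTheta)}\prod_{i,k}\alpha_k^{Z_{ik}}\le\max_Z g(Z,\bTheta)+\log\sum_Z\prod_{i,k}\alpha_k^{Z_{ik}}.
\]
The last sum equals $(\sum_k\alpha_k)^N=1$, so its logarithm is zero. Taking the supremum over $\balpha$, dividing by $N^2$ and applying Step 1 gives
\[
\frac{1}{N^2}\sup_{\balpha}\log p(\bA|\bTheta,\balpha)\le-\srGW_p(\bA,\bTheta).
\]

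\emph{Step 3: lower bound.} Let $\widehat Z$ be the hard maximizer from Step 1 and take $\balpha=\mathbf{1}_K/K$. Keeping only the term $Z=\widehat Z$ in the sum gives
\[
\sup_{\balpha}\log p(\bA|\bTheta,\balpha)\ge g(\widehat Z,\bTheta)-N\log K.
\]
Dividing by $N^2$ yields $\ge-\srGW_p(\bA,\bTheta)-\frac{\log K}{N}$.

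Combining Steps 2 and 3, the absolute difference is at most $\frac{\log K}{N}$ for every $\bTheta$, which gives the claimed uniform bound.
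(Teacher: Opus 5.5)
Your proof is correct, but it takes a different route from the paper's. The paper never proves that the relaxation is tight.

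\begin{itemize}
\item For the lower bound, the paper applies Jensen's inequality (the ELBO) with a mean-field $q$ built from an arbitrary soft $\btau = N\bT$. It chooses $\alpha_k = N^{-1}\sum_i\tau_{ik}$, drops the nonnegative entropy $-\sum_{ik}\tau_{ik}\log\tau_{ik}$, uses $\sum_k\alpha_k\log\alpha_k\geqslant -\log K$, and then maximizes over $\btau$.
\item For the upper bound, it bounds the log-sum-exp by its largest term and relaxes hard assignments to soft ones. This only uses that $Z/N$ is feasible.
\end{itemize}

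You instead first establish the identity $\srGW_p(\bA,\bTheta) = -\frac{1}{N^2}\max_Z g(Z,\bTheta)$ by row-wise rounding, since the objective is affine in each row on a product of simplices. After that, the lemma is just a comparison of a log-sum-exp with its largest term, and the lower bound comes from a single term at uniform $\balpha$. Step 1 is only needed for your lower bound. In Step 2 the trivial inequality $\max_Z g(Z,\bTheta)\leqslant -N^2\srGW_p(\bA,\bTheta)$ suffices, exactly as in the paper.

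\textbf{What each route buys.} Your route gives a stronger structural fact: for fixed $\bTheta$, $\srGW_p$ is an exact relaxation of the profile classification likelihood, so a hard minimizer always exists. It also avoids variational machinery. The paper's route sidesteps tightness entirely and connects directly to the ELBO/entropic-srGW discussion of Section~\ref{subsec:SBM_VI}. Both rely on the zero-diagonal convention in the same essential way. You need it for row-wise affinity; the paper needs it for $\mathbb{E}_q[Z_{ik}Z_{jl}]=\tau_{ik}\tau_{jl}$ on every pair that appears.

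\textbf{On your normalization caveat.} It is substantive rather than cosmetic. With the $\prod_{j>i}$ likelihood of Eq.~\eqref{eq:SBM_likeli} and a symmetric $\bTheta$, $\frac{1}{N^2}\sup_{\balpha}\log p(\bA|\bTheta,\balpha)$ is close to $-\frac{1}{2}\srGW_p(\bA,\bTheta)$. The difference is then of order one, and the stated bound fails. The paper's proof silently switches to the ordered sum $\sum_{i\neq j}$, as you do.
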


\begin{proof}
 See Appendix~\ref{app:proof_Th2}.  
\end{proof}
 The above inequality states that (-srGW) is uniformly closed to the log-likelihood (over $N^2$). Note that the opposite signs for both quantities are in agreement with the fact that they respectively relate to a minimization and a maximization problem. 
 Interestingly, no assumption is made here about the conditional distribution of the adjacency matrix given the labels. A consequence of this inequality, using results from~\cite{ConsistencySBM}, is the convergence in probability of the srGW estimator $\widehat{\bTheta}$ towards the true connectivity matrix in the case of a Bernoulli SBM when the number of nodes $N$ goes to $+\infty$. 

\begin{theorem}\label{Th:Pconv} Under a Bernoulli SBM and assumptions \textbf{A1}-\textbf{A2}-\textbf{A3} in Section~\ref{sec:SBM}, the $\srGW_p$ estimator $\widehat{\bTheta}$ in Eq.~\ref{sbm_estimators} converges in probability to $\bTheta^*$, i.e.  
    \begin{equation}
         \widehat{\bTheta} \underset{N\to +\infty}{\overset{\P}{\xrightarrow{\hspace{0.9cm}}}} \bTheta^*.
    \end{equation}
\end{theorem}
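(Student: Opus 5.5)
The plan is to deduce Theorem~\ref{Th:Pconv} from Lemma~\ref{prop:ineq} by a standard M-estimation (argmax continuity) argument, borrowing the consistency machinery for the maximum likelihood estimator developed in~\cite{ConsistencySBM}.

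\textbf{Step 1: normalized criteria.} Define the two random criteria
\[
M_N(\bTheta) := \frac{1}{N^2}\sup_{\balpha}\log p(\bA\mid\bTheta,\balpha),
\qquad
\widetilde M_N(\bTheta) := -\srGW_p(\bA,\bTheta).
\]
Lemma~\ref{prop:ineq} gives $\sup_{\bTheta}|M_N(\bTheta)-\widetilde M_N(\bTheta)|\le \log K/N \to 0$ deterministically. Hence $\widehat{\bTheta}$, which maximizes $\widetilde M_N$, is an approximate maximizer of $M_N$:
\[
M_N(\widehat{\bTheta}) \ge \sup_{\bTheta} M_N(\bTheta) - \frac{2\log K}{N}.
\]
I will restrict $\bTheta$ to the compact set $[\zeta,1-\zeta]^{K\times K}$ of \textbf{A2}, either by constraining the estimator or by checking that the optimal $\widehat\Theta_{kl}$, a weighted average of the $A_{ij}$, can be clipped without increasing the loss. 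This keeps $\log\Theta_{kl}$ bounded.

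\textbf{Step 2: deterministic limit.} Following~\cite{ConsistencySBM}, I will compare $M_N$ with the complete-data criterion evaluated at the true labels $Z^*$. Up to a global label permutation $\sigma$, and using \textbf{A3} to control $\log\balpha$ terms (which are $O(1/N)$ after division by $N^2$), this criterion is asymptotically
\[
M(\bTheta,\sigma) = \frac12\sum_{k,l}\alpha_k\alpha_l\Big[\Theta^*_{kl}\log\Theta_{\sigma(k)\sigma(l)} + (1-\Theta^*_{kl})\log\bigl(1-\Theta_{\sigma(k)\sigma(l)}\bigr)\Big].
\]
Uniform convergence $\sup_{\bTheta}|M_N(\bTheta)-\max_\sigma M(\bTheta,\sigma)|\to 0$ in probability comes from two facts. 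First, the sum over labelings is dominated by configurations close to $Z^*$ up to permutation; this is the key concentration result of~\cite{ConsistencySBM}, which uses \textbf{A1} to separate distinct label configurations. Second, Bernoulli sums concentrate by Hoeffding's inequality with bounded summands, which \textbf{A2} guarantees.

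\textbf{Step 3: identification and conclusion.} The map $\bTheta\mapsto M(\bTheta,\sigma)$ is a weighted sum of negative Bernoulli cross-entropies. By Gibbs' inequality and $\alpha_k\ge\gamma>0$, it is uniquely maximized at $\Theta_{\sigma(k)\sigma(l)}=\Theta^*_{kl}$, with a well-separated maximum on the compact set. The standard argmax theorem, uniform convergence plus a well-separated maximum plus an approximate maximizer, then yields $\widehat{\bTheta}\to\bTheta^*$ in probability up to label permutation. Labels are only identifiable up to permutation, so I will state the convergence modulo $P_\sigma\cdot P_\sigma^\top$, consistent with the identifiability discussion in Section~\ref{sec:SBM}.

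\textbf{Main obstacle.} The hard step is Step 2: uniform control of the marginal likelihood over $\bTheta$, which hinges on showing that the posterior over labels concentrates around $Z^*$ uniformly in $\bTheta$. I would not reprove this. Instead I would invoke the corresponding theorem of~\cite{ConsistencySBM} (consistency of the MLE under \textbf{A1}--\textbf{A3}), checking that its proof tolerates an $o(1)$-approximate maximizer of the normalized likelihood. Lemma~\ref{prop:ineq} supplies exactly such an approximation with a deterministic $O(1/N)$ slack, so the transfer itself is the easy part.
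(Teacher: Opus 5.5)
Your overall route matches the paper's. You use Lemma~\ref{prop:ineq} to pass to the profiled likelihood, take a uniform deterministic limit from \cite{ConsistencySBM}, and conclude with an argmax argument; your approximate-maximizer version of Step~1 is equivalent to the paper's uniform convergence of $-\srGW_p$. But Step~2 as written is false, and Step~3 inherits the error. The profiled likelihood does not converge uniformly to $\max_\sigma M(\bTheta,\sigma)$, because the labelings dominating $\sum_z$ are not close to $Z^*$ uniformly in $\bTheta$. For $\bTheta$ far from $\bTheta^*$, the best labeling may merge or split true clusters. Concretely, take $K=2$, $\balpha=(1/2,1/2)$, $\Theta^*_{11}=\Theta^*_{22}=0.8$, $\Theta^*_{12}=0.2$, and $\Theta_{11}=0.5$, $\Theta_{22}=0.1$, $\Theta_{12}=\Theta_{21}=0.9$; all entries lie in $[0.1,0.9]$ and \textbf{A1} holds. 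Both permutations give $M(\bTheta,\sigma)\approx-0.785$. Putting every node in cluster~1 instead gives, deterministically, $\frac{1}{N^2}\log p(\bA\mid\bTheta,\balpha)\ge\frac{N-1}{2N}\log\frac12-\frac{\log 2}{N}\to-0.347$. So the uniform-in-$\bTheta$ posterior concentration you name as the main obstacle is not a true statement, and the Gibbs argument in Step~3 identifies the maximizer of the wrong function. A warning sign is that \textbf{A1} plays no role in your identification.

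The correct limit is a maximum over row-stochastic confusion matrices $B\in\mathbb{R}_+^{K\times K}$:
\[
\mathbb{M}(\bTheta)=\max_{B}\ \frac12\sum_{k,k',l,l'}\alpha_k\alpha_l B_{kk'}B_{ll'}\left[\Theta^*_{kl}\log\Theta_{k'l'}+(1-\Theta^*_{kl})\log(1-\Theta_{k'l'})\right].
\]
This is the same structure as $f$ in the paper's proof of Theorem~\ref{labelrecovery}. You can derive it directly for $\srGW_p$. The centered part of $L_p$ is $-\sum_{k,l}\log\frac{\Theta_{kl}}{1-\Theta_{kl}}\,\bT_k^\top\boldsymbol{M}\bT_l$. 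It is bounded by $cK^2\|\boldsymbol{M}\|_{\text{op}}/N$ uniformly in $\bT$ and in $\bTheta\in[\zeta,1-\zeta]^{K\times K}$, exactly as in that proof. Then $\min_{\bT}\mathbb{E}[L_p\mid Z]$ reduces, via $B=\pi_N(\bT)$ and $\widehat\alpha_k\to\alpha_k$, to $-2\mathbb{M}(\bTheta)+o(1)$.

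Identification of this limit genuinely needs \textbf{A1}. Gibbs gives $\mathbb{M}(\bTheta)\le\mathbb{M}(\bTheta^*)$. Using $\alpha_k\ge\gamma$, equality forces $\Theta_{k'l'}=\Theta^*_{kl}$ whenever $B_{kk'}B_{ll'}>0$. If two true clusters shared a column of $B$, their rows and columns of $\bTheta^*$ would coincide, contradicting \textbf{A1}. Hence $B$ is a permutation and $\bTheta=P_\sigma\bTheta^*P_\sigma^\top$, which is the mechanism of Lemma~\ref{lemma:minimizers}. Continuity of $\mathbb{M}$ on the compact set then yields the well-separated maximum.

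Two smaller points. First, clipping $\widehat\Theta_{kl}$ into $[\zeta,1-\zeta]$ does increase the loss whenever the weighted average falls outside, since for fixed $\bT$ the loss is a convex cross-entropy in $\Theta_{kl}$ minimized at that average. You therefore have to define the estimator over the compact set. Second, your insistence on convergence only modulo $P_\sigma\cdot P_\sigma^\top$ is correct; the paper's proof glosses over this by assuming a unique minimizer.
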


\begin{proof}
See Appendix~\ref{app:proof_Th2}
\end{proof}

\subsection{Optimization and selection of \texorpdfstring{$K$}{}}\label{subsec:optim}
 
\begin{algorithm}[H]\label{GOTSBM}
    \caption{Block Coordinate Descent for Estimation of SBM parameters/clsuters}
Initialize $\bT_0$ via $k$-means or spectral clustering\;
\For{$t=0$ \KwTo N$\_$iter}{  
    $\bTheta_{t} \gets \underset{\bTheta}{\text{argmin }}{L_\ell(\bT_t, \bTheta)}$  \Comment{via closed-form solution in Eq.~\eqref{eq:closed_theta}}\;
    $\bT_{t+1} \gets \underset{\bT}{\text{argmin }}{ \left(L_\ell(\bT, \bTheta_t) + \lambda\Omega(\bT)\right)}$ \Comment{Majorization-Minimization algorithm from \cite{srGW}}\;
}
Return $(\widehat{\bTheta}, \widehat{\bT}) := (\bTheta_t, \bT_{t+1})$\;
\end{algorithm}

So far, we assumed that the actual $K$ (say $K^*$) was known. However, when this is not the case, $K^*$ cannot be estimated by solving $f(K) :=\min_{\bTheta \in \mathbb{R}^{K \times K}}\srGW_{\bullet}(\bA, \bTheta)$. 
Stated differently, for a solution $\widehat{\bT} \in \mathcal{U}_K(N^{-1}\mathbf{1}_N)$, the estimated cluster proportions $\widehat{\balpha}= \widehat{\bT}^\top \mathbf{1}$ cannot be $K^*$-sparse. The reason is that the previous $f(\cdot)$ is monotonically decreasing in $K$ (detailed in Appendix~\ref{app:mod_sel}). For this reason and following~\cite{srGW}, we propose to fix a high initial $K$ in the experiments and adopt the $\ell_{1/2}$ pseudo-norm $\Omega(\bT) := \sum_{k=1}^{K}{\left( \sum_{i=1}^N{T_{ik}}\right)^{1/2}}$, with regularization strength $\lambda \in \R_+$, as shown in Algorithm~\ref{GOTSBM}. That penalty forces the estimated cluster proportions $\widehat{\balpha}$ to be sparse, which comes down to set to $\mathbf{0}_N$ some columns of $\widehat{\bT}$. 
In practice, we focus on \emph{composite inner losses} $\ell$ such that the $\srGW_\ell$ loss is a quadratic function in $\bT$. As detailed in Appendix~\ref{sec:appendix_various_losses}, this covers a range of use cases relevant for SBM, including the square loss and the Bernoulli log-likelihood, for which this assumption on $\ell$ also allows to derive closed-form solutions for $\bTheta$.
In this setting, the OT matrix for the regularized $\srGW_\ell$ problem, and a fixed $\bTheta_t$, is handled with the Majorization-Minimization (MM) approach proposed in \cite{srGW}. This solver iterates over $t$ between the following steps:  (i) linearizing $\Omega$ in the currently estimated $\bT_t$ providing a matrix $R(\bT_t)$; (ii) minimizing in $\bT$ the $\srGW_\ell$ loss regularized with the linear term $\langle R(\bT_t),\bT \rangle $ with for instance a Frank-Wolfe algorithm. While using a warmstart strategy for step (ii), we observed that the computational time for this solver is mostly due to its first iteration, hence it is nearly as efficient as solvers for the unregularized srGW problem.
As shown in Algorithm~\ref{GOTSBM}, the global optimization problem for SBM parameter and cluster estimation can be solved 
by block-coordinate descent, alternating between: (i) updating $\bTheta$ given $\bT$ via the closed-form solutions in Eq.~\eqref{eq:closed_theta} of Appendix~\ref{sec:appendix_various_losses})
; (ii) updating $\bT$ given $\bTheta$ using the MM algorithm detailed above. Overall, this solver can easily be implemented on GPU and its theoretical complexity is dominated by the computation of the gradient w.r.t $\bT$, with cost $O(N^2K + K^2N)$.

\section{Numerical experiments}\label{sec:exp}

Whereas the potential of srGW in terms of community detection in real networks has been assessed in \cite{srGW,OTclusteringattributed}, mostly with a fixed atom $\bTheta$ equal to $I_K$ in order to enforce assortativity,  in this section we evaluate our approach on synthetic data generated from SBM, in order to empirically validate the theoretical results in the previous section. We consider three classical SBM regimes capturing different connectivity patterns: \emph{Assortative}, \emph{Hub}, and \emph{Disassortative}. 
Let $\alpha, \beta \in \left[0, 1\right]$, 
such that $\alpha > \beta$. In the \emph{Assortative} case, the connectivity matrix is of the form $\bTheta  = (\alpha - \beta)I_K + \beta \boldsymbol{1}\boldsymbol{1}^T $. In the \emph{Hub} case, we consider the same $\bTheta$ except for the first row and column, which are set to $\alpha$ instead of $\beta$. In the \emph{Disassortative} case, the connectivity matrix is of the form $\bTheta  = (\beta - \alpha)I_K + \alpha \boldsymbol{1}\boldsymbol{1}^T$.
In each scenario, we generate graphs with $N=10^3$ nodes, fixing $\beta = 0.03$ and varying $\alpha \in [\beta, 0.2]$. Some of these samples are illustrated in Figure~\ref{fig:enter-label} of Appendix~\ref{app:exp}. 

\paragraph{Partitioning} 
\begin{figure}[t!]
        \centering
        \makebox[\textwidth]{%
    \includegraphics[width=1.0\textwidth]{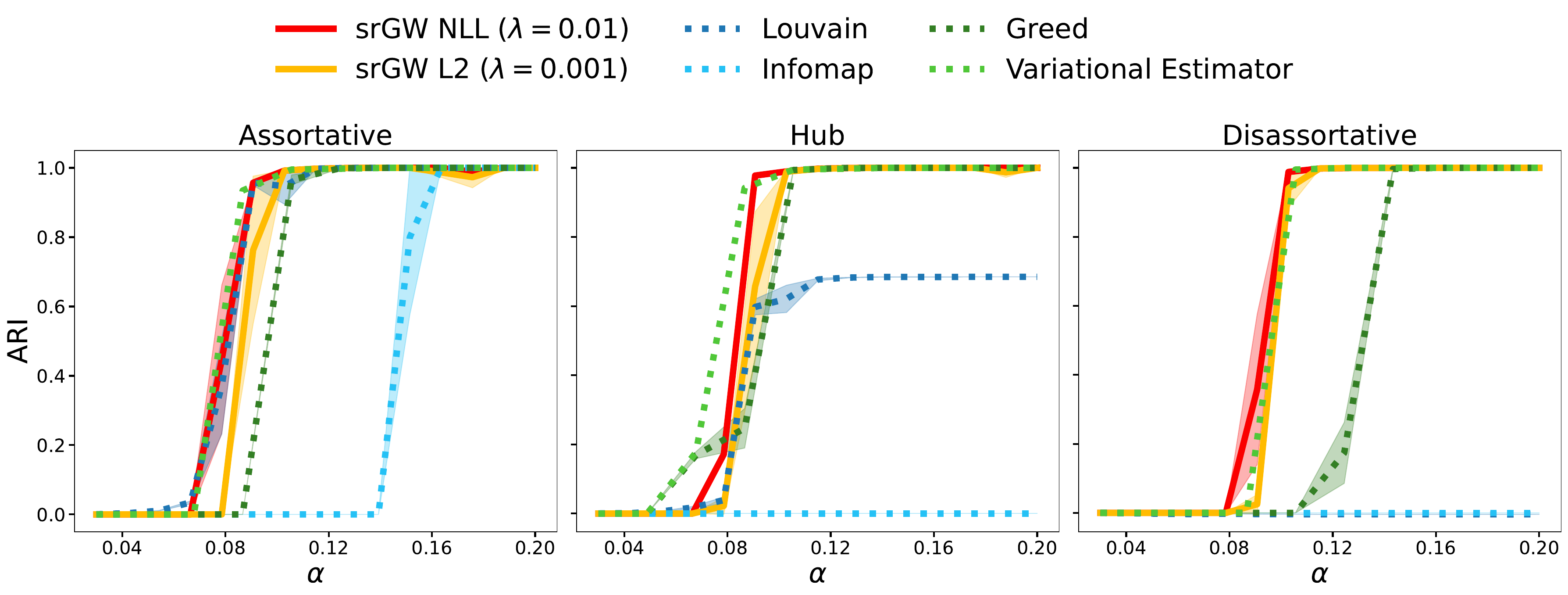}
}
\vspace{-7mm}\caption{Evolution of the average ARI  w.r.t $\alpha$, 5 simulated graphs for each $\alpha$. The higher $\alpha$, the easier to detect the clusters.}
\label{ARI}
\end{figure}
\begin{figure}[t!]
  \centering
          \makebox[\textwidth]{%
    \includegraphics[width=1.\textwidth]{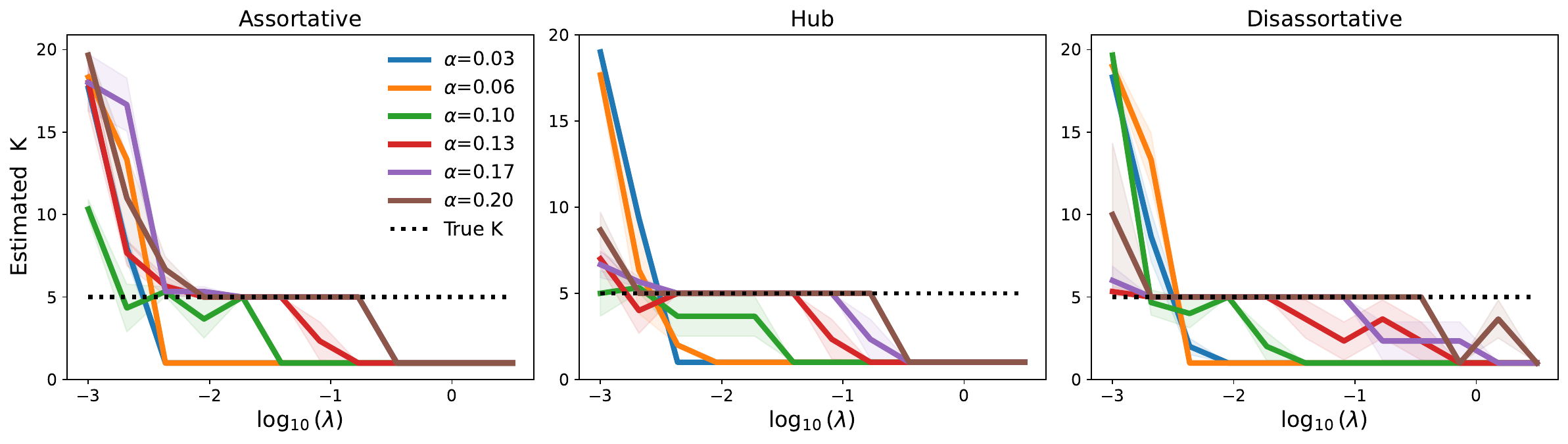}
    \label{plateau}
}
\vspace{-7mm}\caption{Average selected $K^*$ versus (log of) the sparsity hyper-parameter $\lambda$.}
\end{figure}

We first evaluate the ability of srGW-based estimators to recover the latent clusters across the three SBM regimes. We consider two inner losses for srGW: the binary cross-entropy (srGW-NLL) and the squared loss (srGW-L2). For both methods, the transport plan $\bT_0$ is initialized from spectral clustering based on the leading singular vectors of $\bA$. Empirically, this initialization consistently outperformed alternatives such as random initialization or direct $k$-means on the rows of $\bA$. The sparsity parameter $\lambda$ was selected empirically for both methods. We compare these two variants against four state-of-the-art baselines. The first two are \emph{Louvain}~\cite{Louvain} and \emph{Infomap}~\citep{rosvall2009map}, two widely used community detection methods respectively based on modularity maximization and flow compression. We also consider two model-based competitors: \emph{Greed}~\cite{Greed21}, which performs greedy maximization of the Integrated Classification Likelihood (ICL), and the \texttt{blockmodels} package~\cite{blockmodels}, which implements variational EM with ICL-based model selection. Except for Louvain and Infomap, all methods search over at most $K=20$ clusters, whereas the true number of clusters is set to $K^*=5$. Partitioning performance is measured using the Adjusted Rand Index (ARI)~\cite{AdjustedMetricComparison, ComparingPartition}, which takes values in $[0,1]$, where 1 indicates perfect recovery of the true labels (up to label permutation), and 0 corresponds to random assignments. The average ARI over 5 simulated graphs for each value of $\alpha$ is reported in Figure~\ref{ARI}. In the Assortative and Hub regimes, srGW-NLL consistently recovers the hidden clustering structure more accurately than srGW-L2, highlighting the benefit of a likelihood-based loss. As expected, Louvain and Infomap perform competitively in the assortative setting but degrade markedly in the Hub and Disassortative regimes, where their underlying structural assumptions are violated. Most importantly, we observe that srGW-NLL compares favorably to the best competitors, Greed and VEM, while being 1 to 2 orders of magnitude faster on CPU, as reported in Appendix~\ref{runtimes}.

\paragraph{Sensitivity analysis} 
 We analyze the sensitivity of srGW NLL to the hyperparameter $\lambda$, in the same settings as above. Figure~\ref{plateau} shows the average estimated number of clusters over 5 runs as a function of~$\lambda$. Interestingly, as soon as $\alpha$ is high enough to permit the actual clusters to be recovered, we see a plateau taking form around $10^{-2}$, roughly corresponding to a value of $\lambda = K/2N$.    




\section{Conclusion and perspectives}

We have shown that minimizing an optimal transport divergence leads to consistent estimators of the SBM connectivity matrix and clusters of nodes.  
These estimators can be efficiently and simply computed through a block-coordinate descent. A penalization on the transport plan is used to perform model selection. The tuning of this hyperparameter should be investigated in more depth, in the future. Further work could include comparison with MLE, variational and spectral estimators and explore minimax optimality of srGW barycenter estimators, using in particular techniques similar to \cite{GaucherSBM}. A line of interest would be to show that when we do not know $K^*$, the obtained target structure is asymptotically weakly isomorphic to the true connectivity matrix. 


\clearpage

\bibliographystyle{unsrtnat}
\bibliography{references}

\medskip


\clearpage

\appendix

\section{Proofs of consistency}

\subsection{Proof of Lemma~\ref{prop:expectedloss}}\label{proof:Lemma1}

Throughout this section it is always assumed that $\bTheta$ is the true connectivity matrix and $Z$ the actual cluster assignment of the SBM the adjacency matrix is sampled from. In order to keep the notation uncluttered we use $\bTheta$ in place of $\bTheta^*$ and $Z$ in place of $Z^*$.
In order to prove Lemma~\ref{prop:expectedloss} we need the following result 

\begin{lemma} \label{lemma:minimizers} Let 
$d:\R^2\mapsto \R_+$ a divergence, i.e. such that $d(a, b) = 0 \Longleftrightarrow a = b$, $Z :\llbracket N \rrbracket \to \llbracket K \rrbracket$ a surjective node assignment, and $\bTheta \in \R^{K\times K}$ satisfying assumption (\textbf{A1}) in Section~\ref{sec:SBM}.
Finally, let $F :\bT \in \mathcal{U}_K(\boldsymbol{1}_N/N) \mapsto \displaystyle \sum_{i,j,k,l}d(\Theta_{Z_i Z_j}, \Theta_{kl})T_{ik} T_{jl}$. Then, the minimizers of $F$ are exactly of the form $T = \frac{1}{N}ZP_\sigma$ where $P_\sigma$ is any permutation matrix, possibly limited to the identity, such that $\bTheta = P_\sigma \bTheta P_\sigma^\top$.
\end{lemma}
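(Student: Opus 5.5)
We work with nonnegativity: $F(\bT)\geq 0$ because $d\geq 0$ and $T_{ik}\geq 0$. The candidate $\bT^* = \frac1N Z P_\sigma$ gives zero. Indeed $T^*_{ik} = \frac1N \mathbb{1}\{\sigma(Z_i)=k\}$ (with the convention that $P_\sigma$ maps cluster $Z_i$ to column $\sigma(Z_i)$), so $F(\bT^*) = \frac{1}{N^2}\sum_{i,j} d(\Theta_{Z_iZ_j},\Theta_{\sigma(Z_i)\sigma(Z_j)})$. This vanishes since $\bTheta = P_\sigma\bTheta P_\sigma^\top$ means $\Theta_{\sigma(k)\sigma(l)}=\Theta_{kl}$. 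Hence the minimum value is $0$ and every such $\bT^*$ is a minimizer. It remains to show that any $\bT$ with $F(\bT)=0$ has this form.

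Take $\bT$ with $F(\bT)=0$. Each term must vanish, so for all $i,j,k,l$ with $T_{ik}T_{jl}>0$ we get $\Theta_{Z_iZ_j}=\Theta_{kl}$. For each node $i$ let $S_i=\{k: T_{ik}>0\}$, which is nonempty since row sums equal $1/N$. The first step is to show that each column $k$ receives mass only from one true cluster. Suppose $k\in S_i\cap S_{i'}$ with $Z_i=a\neq a'=Z_{i'}$. For any node $j$ and any $l\in S_j$, the vanishing condition gives $\Theta_{aZ_j}=\Theta_{kl}=\Theta_{a'Z_j}$. The roles can be swapped by taking the pair $(j,i)$, which gives $\Theta_{Z_ja}=\Theta_{lk}=\Theta_{Z_ja'}$. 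Since $Z$ is surjective, $Z_j$ ranges over all of $\llbracket K\rrbracket$. So rows $a,a'$ and columns $a,a'$ of $\bTheta$ coincide, contradicting \textbf{A1}.

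The next step is to show that each true cluster uses only one column. Define $\phi(k)$ as the unique cluster feeding column $k$, for columns with positive mass. Every cluster $a$ feeds at least one column by surjectivity, and the columns fed by distinct clusters are disjoint. Since there are only $K$ columns, pigeonhole forces each cluster to feed exactly one column, and all columns are used. Hence $S_i=\{\sigma(Z_i)\}$ for a bijection $\sigma$, and $T_{i\sigma(Z_i)}=1/N$. Thus $\bT=\frac1N ZP_\sigma$.

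Finally, plugging $\bT$ back in, the condition $F(\bT)=0$ gives $\Theta_{Z_iZ_j}=\Theta_{\sigma(Z_i)\sigma(Z_j)}$ for all $i,j$. By surjectivity this is $\bTheta=P_\sigma\bTheta P_\sigma^\top$.

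The main obstacle is the first step, namely that a column cannot mix clusters. It is the only place where \textbf{A1} enters, and care is needed to extract both the row equality and the column equality from the symmetric roles of $(i,j)$.
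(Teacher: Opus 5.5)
Your proof is correct and follows the same route as the paper: $F\geq 0$, $F$ vanishes at $\frac1N Z P_\sigma$, and any zero of $F$ is shown, via a contradiction with \textbf{A1} using an arbitrary third node $j$ with $T_{jl}>0$, to have no column of $\bT$ receiving mass from two distinct true clusters. You additionally make explicit several points the paper dismisses as ``easy to check'': the pigeonhole argument that produces the bijection $\sigma$, the verification that $\bTheta = P_\sigma \bTheta P_\sigma^\top$, and the converse that every such $\frac1N Z P_\sigma$ is a minimizer.
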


\begin{proof}[Proof of Lemma~\ref{lemma:minimizers}]
Since the above quantity is positive the best we can do to minimize it in $\bT$ is to chose it such that $F(\bT) = 0$. First, we remark that if $\bT = \frac{1}{N}Z$, $F(\bT) = 0$. Let $T$ be a minimizer, so that $F(\bT) = 0$. We would like to show that if $Z_i \neq Z_j$, then there is no $k$ such that  $T_{ik}\cdot T_{jk} > 0$. By contradiction, assume that for one pair $(i,j)$ we have $Z_i \neq Z_j$ and $\exists k$, $T_{ik} \cdot T_{jk}>0$. 
Consider a third node $u \in \llbracket N \rrbracket$ and, without loss of generality, assume $T_{ul} > 0$ for some $l$, possibly different from $Z_u$.
Then, since $F(\bT)=0$ one must have $d(\Theta_{Z_iZ_u}, \Theta_{kl})T_{ik}T_{ul} = d(\Theta_{Z_jZ_u}, \Theta_{kl})T_{jk}T_{ul} = 0$. Given that $T_{ik}, T_{jk}$ and $T_{ul}$ are all positive, this forces
 $\Theta_{Z_iZ_u} = \Theta_{kl} = \Theta_{Z_j Z_u}$. Similarly it can be shown that $\Theta_{Z_uZ_i} = \Theta_{l k} = \Theta_{Z_u Z_j}$.
Since $u$ and its clusters $Z_u$ are entirely generic, the last equalities imply that $\forall r \in \llbracket K \rrbracket, \Theta_{Z_i r} = \Theta_{Z_j r}, \Theta_{r Z_i} = \Theta_{ r Z_j}$ which contradicts assumption (\textbf{A1}). Then, it's easy to check we can build $\sigma \in \mathfrak{S}_K$ such that $\bT = \frac{1}{N}ZP_\sigma$ and $\bTheta = P_\sigma \bTheta P_\sigma^\top$.  

\end{proof}

We are now ready to prove the main Lemma. 

\begin{proof}[Proof of Lemma~\ref{prop:expectedloss}.]
We recall the objective
\begin{equation*}
\begin{split}
    \mathcal{L}_U(\bT) &= \E[L_U(\bT, \bTheta) | Z ] = \sum_{ijkl}{\E \left[ D_U(A_{ij}, \Theta_{kl}) | Z \right] T_{ik}T_{jl}}.  
\end{split}
\end{equation*}
Because of the constraints on $\bT$, the following term is constant w.r.t. $\bT$:
\begin{equation*}
\begin{split}
    &\sum_{ijkl}{\E \left[ D_U(A_{ij},\Theta_{Z_iZ_j}) | Z \right] T_{ik}T_{jl}}\\
    &= \frac{1}{N^2} \sum_{ij}{\E \left[ D_U(A_{ij},\Theta_{Z_iZ_j}) |Z \right]} \\
    &= \frac{1}{N^2} \sum_{ij}\left( \E \left[ U(A_{ij}) - U(\Theta_{Z_i Z_j})|Z  \right] - U'(\Theta_{Z_i Z_j}) \E \left[ A_{ij} - \Theta_{Z_i Z_j}|Z  \right] \right).
\end{split}
\end{equation*}
Then, assuming that $\E \left[ A_{ij} |Z  \right] = \Theta_{Z_i Z_j}$, like in Bernoulli or Poisson SBMs, the last term inside parentheses vanishes and  minimizing $\mathcal{L}(\bT)$ boils down to minimizing 
\begin{equation*}
\begin{split}
    F(\bT) :&= \sum_{i,j,k,l}{\E[ D_U(A_{ij}, \Theta_{kl}) - D_U(A_{ij}, \Theta_{Z_iZ_j}) | Z]T_{ik}T_{jl}} \\
    &= \sum_{i,j,k,l}{\E[ U(\Theta_{Z_iZ_j}) - U(\Theta_{kl}) - U'(\Theta_{kl})(A_{ij} - \Theta_{kl})) | Z]T_{ik}T_{jl}} \\
    &= \sum_{i,j,k,l}{(U(\Theta_{Z_iZ_j}) - U(\Theta_{kl}) - U'(\Theta_{kl})(\Theta_{Z_iZ_j} - \Theta_{kl}))T_{ik}T_{jl}} \\
    &= \sum_{i,j,k,l}{D_U(\Theta_{Z_iZ_j}, \Theta_{kl})T_{ik}T_{jl}}.
\end{split}
\end{equation*}
We can now apply Lemma~\ref{lemma:minimizers} to obtain the unique minimizer $\bT = \frac{1}{N}ZP_\sigma$ where $P_{\sigma}$ is any permutation such that $\bTheta = P_\sigma \bTheta P_\sigma^\top$.
\end{proof}

\subsection{An alternative version of Lemma~\ref{prop:expectedloss}}\label{proof:alt_Lemma1}

In Lemma~\ref{prop:expectedloss}, we assumed that inner loss $\ell$ of the srGW problem in Eq.~\eqref{eq:app_objective} was a Bregman divergence $D_U$ and the generative model a Bernoulli SBM.
However, in case the inner loss $\ell$ is the negative logarithm of the same probability distribution used in the generative SBM (what we called $L_p(\bT, \bTheta)$ in Eq.~\eqref{global-loss}), Lemma~\ref{prop:expectedloss} always holds, no matter the type of generative SBM (i.e. Bernoulli, Poisson, Gaussian, zero-inflated laws).

\begin{proof}[Proof of Proposition \ref{prop:expectedloss}.]
Recall that our objective now is
\begin{equation}
\mathcal{L}_p(\bT) = \E[L_p(\bT, \bTheta) | Z ] = \sum_{ijkl}{\E \left[ -\log p(A_{ij} | \Theta_{kl}) | Z \right] T_{ik}T_{jl}}
\label{eq:app_objective}
\end{equation}
where the conditional expectation is taken under $A_{ij} \sim p(\cdot | \Theta_{Z_i Z_j})$. Observing that the following term is a constant given constraints on $\bT$,
\begin{equation*}
\sum_{ijkl}{\E \left[ -\log p(A_{ij} | \Theta_{Z_iZ_j}) | Z \right] T_{ik}T_{jl}} = \frac{1}{N^2}\sum_{ij}\E \left[ -\log p(A_{ij} | \Theta_{Z_iZ_j}) | Z \right],
\end{equation*}
    
it can readily be seen that minimizing the above objective w.r.t. $\bT$ boils down to minimizing
\[
F(\bT) = \sum_{i,j,k,l}\mathbb{E}\left[\log \frac {p(A_{ij}|\Theta_{Z_i Z_j})}{p(A_{ij}|\Theta_{kl})} \bigm\vert Z \right]T_{ik}T_{jl} = \sum_{i,j,k,l} d(\Theta_{Z_i Z_j}, \Theta_{kl})T_{ik} T_{jl}
\]
where $d$, here, is nothing but the conditional KL divergence between $p(\cdot | \Theta_{Z_iZ_j})$ and $p(\cdot | \Theta_{kl})$, being null iff $\Theta_{Z_i Z_j} = \Theta_{kl}$\footnote{Recall that, in this section, $Z$ is used in place of $Z^*$. So $p(A_{ij}| \Theta_{Z_iZ_j})$ is a short hand notation for $p(A_{ij} | (Z_i,Z_j)= (Z^*_i, Z_j^*); \Theta^*_{kl})$ whereas $p(A_{ij}| \Theta^*)$ stands for $p(A_{ij} | (Z_i,Z_j) = (k,l); \Theta^* )$}.
Now, thanks to Assumption~(\textbf{A3}) in Section~\ref{sec:SBM}, $Z$ is a surjective node assignment and all assumptions of Lemma~\ref{lemma:minimizers} are fulfilled, so we have the result.
\end{proof}

To summarize, we observe that the two versions of Lemma~\ref{prop:expectedloss} differ in two aspects:
    \begin{enumerate}
        \item the alternative version is true for all SBMs whereas the original only holds for models which satisfy $\E[A_{ij} | Z] = \Theta_{Z_iZ_j}$ (the Bernoulli model is a specific case here);
        \item the proof of proposition of the alternative version exploits the KL divergence, whereas the one of the original version exploits the definition Bregman divergence.
    \end{enumerate}
Moreover, a summary of what proved as preliminar to the statement of Theorem~\ref{labelrecovery} cans be seen in Figure~\ref{fig.Tree}.

\begin{figure}[t]
\centering
\begin{tikzpicture}[
  node distance=2cm,
  every node/.style={draw, rectangle, rounded corners, align=center},
  arrow/.style={->, thick}
]

\node (L1) {Lemma~\ref{lemma:minimizers}};

\node (P1) [below left of=L1, xshift=-2.5cm] {Lemma~\ref{prop:expectedloss} (Bernoulli with Bregman divergences)};
\node (P1b) [below right of=L1, xshift=2.5cm] {Lemma~\ref{prop:expectedloss} bis (Negative Log Likelihood)};

\node (T1) [below of=P1, xshift=2cm] {Theorem~\ref{labelrecovery} (Bernoulli with Bregman divergences)};

\draw[arrow] (L1) -- (P1);
\draw[arrow] (L1) -- (P1b);

\draw[arrow] (P1) -- (T1);
\end{tikzpicture}

\caption{Tree structure of our results up to Theorem~\ref{labelrecovery}.}
\label{fig.Tree}
\end{figure}    

\clearpage



    

\subsection{Recovery of the labels (Theorem \ref{labelrecovery})}\label{proof:Th1}

In the following, we prove Theorem~\ref{labelrecovery}that guarantees the recovery of the node labels ($Z$) via  srGW based estimator, \emph{knowing the true connectivity matrix} $\bTheta=\bTheta^*$. 
This result holds when the inner loss $\ell$ is any Bregman divergence $D_U$,  induced by U such that
\begin{equation*}
    \ell(a,b) = D_U(a,b) = U(a) - U(b) - U'(b)(a-b),
\end{equation*}
leading to the srGW loss denoted $L_U$ in Eq.~\ref{global-loss}. 
For $D_U$ to be a well-defined divergence for Bernoulli SBM, we need the following assumption:
\paragraph{Assumption A.4\label{assumption_U}} We assume that $U:\left[0;1 \right] \rightarrow \R$ is strictly convex on $\left[0;1 \right]$ and continuously differentiable at least on $\left] 0;1 \right[$. If it is on $\left[ 0;1 \right]$, no additional assumption is needed. Otherwise, we leverage Assumption \textbf{A.2}, constraining $\Theta_{kl}$ to $\left[\zeta, 1-\zeta \right]$ with $\zeta > 0$, such that in all cases there exists $c>0$ satisfying $|U^\prime(\Theta_{kl})| \leqslant c$.


 \begin{remark}
    Taking the square loss or the negative Bernoulli log-likelihood as losses to estimate the labels is covered by our choice of Bregman divergence $D_U$.
    Indeed, the square loss is recovered by taking $U : x \mapsto x^2$, and its derivative $U'(x) = 2x$ is bounded by 2 for any $\Theta_{kl} \in \left[0, 1\right]$. Then for the negative Bernoulli log-likelihood, one can consider $U : x \mapsto x\log{x} + (1-x)\log{(1-x)}$ such that $\displaystyle \ell(x, y) = x\log{\frac{x}{y}} + (1-x)\log{\frac{(1-x)}{(1-y)}} $. In this setting, the elements which depend only on $x$, would be ignored in the $\srGW_\ell$ problem because of the constraint on the fixed marginal $N^{-1}\mathbf{1}_N$. Therefore the aforementioned Bregman divergence $\ell$ can be considered as equivalent to the Binary Cross Entropy loss $\ell_{BCE}(x, y) = -x\log{y} - (1-x)\log{(1-y)}$, usually encountered for Bernoulli SBM, as they both share the same minimizers for their corresponding srGW problem. Moreover, under assumption (\textbf{A2}), $\Theta_{kl} \in \left[\zeta, 1-\zeta\right]$, for which $U'(x) = \log\frac{x}{1-x}$ is bounded by $U'(1-\zeta)$.  \\
\end{remark}

We also recall a classical concentration result used in the following proof and whose proof can be found in~\cite{Vershynin_2026}.

\begin{theorem}(Concentration of operator norm of symmetric matrices with subgaussian entries)\label{Th:opnorm}
    Let $\boldsymbol{M}$ be a $N\times N$ matrix with subgaussian entries on and above the diagonal. Then, for any $t>0$ we have :
    \begin{equation*}
\|\boldsymbol{M}\|_{\text{op}} \leqslant CD(\sqrt{N} + t)
    \end{equation*}

with probability at least $1 - 4e^{-t^2}$. Here, the constant $C$ is an absolute constant, and $D=\max_{ij}{\|M_{ij}\|_{\psi_2}}$, where $\|\cdot \|_{\psi_2}$ is the notation for Orlicz norm of subgaussian random variables.
\end{theorem}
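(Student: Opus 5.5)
The plan is the standard $\varepsilon$-net argument, as in \cite{Vershynin_2026}. As usual in this statement, the entries $M_{ij}$, $i\leqslant j$, are taken to be independent and centered, with $M_{ji}=M_{ij}$; in the application to the proof of Theorem~\ref{labelrecovery} we will take $\boldsymbol{M} = \bA - \E[\bA|Z]$, which is of this form. Since $\boldsymbol{M}$ is symmetric, $\|\boldsymbol{M}\|_{\text{op}} = \sup_{\|x\|_2=1} |\langle \boldsymbol{M}x, x\rangle|$. The argument has three steps: discretize the sphere, bound one quadratic form, then take a union bound.

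\emph{Discretization.} Let $\mathcal{N}$ be a $1/4$-net of the unit sphere $S^{N-1}$. By the volumetric argument, $\mathcal{N}$ can be chosen with $|\mathcal{N}| \leqslant 9^N$. For any unit $x$, pick $x_0\in\mathcal{N}$ with $\|x-x_0\|_2\leqslant 1/4$. Writing
\begin{equation*}
\langle \boldsymbol{M}x,x\rangle - \langle \boldsymbol{M}x_0,x_0\rangle = \langle \boldsymbol{M}x, x-x_0\rangle + \langle \boldsymbol{M}(x-x_0), x_0\rangle
\end{equation*}
gives $|\langle \boldsymbol{M}x,x\rangle - \langle \boldsymbol{M}x_0,x_0\rangle| \leqslant \tfrac12\|\boldsymbol{M}\|_{\text{op}}$. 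Taking the supremum over $x$ yields
\begin{equation*}
\|\boldsymbol{M}\|_{\text{op}} \leqslant 2\max_{x\in\mathcal{N}}|\langle \boldsymbol{M}x,x\rangle|.
\end{equation*}

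\emph{Fixed-vector tail bound.} Fix a unit vector $x$. Then
\begin{equation*}
\langle \boldsymbol{M}x,x\rangle = \sum_{i\leqslant j} c_{ij}M_{ij},
\end{equation*}
with $c_{ii}=x_i^2$ and $c_{ij}=2x_ix_j$ for $i<j$. This is a linear combination of independent centered subgaussian variables, and $\sum_{i\leqslant j}c_{ij}^2 \leqslant 2\big(\sum_i x_i^2\big)^2 = 2$. The general Hoeffding inequality for subgaussian variables therefore gives
\begin{equation*}
\|\langle \boldsymbol{M}x,x\rangle\|_{\psi_2}\leqslant C_0 D,
\end{equation*}
and hence $\P(|\langle \boldsymbol{M}x,x\rangle| > u)\leqslant 2\exp(-c_0u^2/D^2)$ for absolute constants $C_0, c_0$.

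\emph{Union bound.} Take $u = \tfrac{C}{2}D(\sqrt{N}+t)$ and use $(\sqrt N + t)^2 \geqslant N + t^2$. The union bound over $\mathcal{N}$ gives
\begin{equation*}
\P\big(\|\boldsymbol{M}\|_{\text{op}} > CD(\sqrt N + t)\big) \leqslant 2\cdot 9^N \exp\big(-c_0C^2(N+t^2)/4\big).
\end{equation*}
Choose the absolute constant $C$ so that $c_0C^2/4 \geqslant \max(\log 9, 1)$. Then $9^N e^{-c_0C^2N/4}\leqslant 1$ and $e^{-c_0C^2t^2/4}\leqslant e^{-t^2}$, so the probability is at most $2e^{-t^2}\leqslant 4e^{-t^2}$.

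The main point requiring care is the second step. The entries of $\boldsymbol{M}$ are not all independent, only those on and above the diagonal are, so the quadratic form has to be rewritten over the independent entries before Hoeffding's inequality can be applied. One must also check that the coefficient vector keeps bounded Euclidean norm; the factor $2$ on the off-diagonal coefficients is harmless. Centering is also necessary, since otherwise the mean matrix contributes a term of order $N$ and the bound fails. Everything else is routine adjustment of absolute constants.
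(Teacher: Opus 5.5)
Your proof is correct, but it follows a different route from the one behind the statement. The paper gives no proof and refers to \cite{Vershynin_2026}. There the symmetric case is obtained as a corollary of the bound for matrices with \emph{all} entries independent:
\begin{itemize}
\item one writes $\boldsymbol{M} = \boldsymbol{M}^{+} + \boldsymbol{M}^{-}$, with $\boldsymbol{M}^{+}$ the upper-triangular part (diagonal included) and $\boldsymbol{M}^{-}$ the strictly lower-triangular part;
\item each piece has independent centered subgaussian entries, so the rectangular estimate $\|\boldsymbol{M}^{\pm}\|_{\text{op}} \leqslant CD(2\sqrt{N}+t)$ applies to each with probability $1-2e^{-t^2}$ (that estimate is itself proved by a net argument on the bilinear form $\langle \boldsymbol{M}^{\pm}x,y\rangle$ over two spheres);
\item the triangle inequality and a union bound finish the proof, and the union of these two exceptional events is where the $4$ in $1-4e^{-t^2}$ comes from.
\end{itemize}

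You instead use symmetry directly, through $\|\boldsymbol{M}\|_{\text{op}} = \sup_{\|x\|_2=1}|\langle \boldsymbol{M}x,x\rangle|$ and a single $1/4$-net of one sphere. You handle the coupling $M_{ji}=M_{ij}$ by regrouping the quadratic form over the independent entries $i\leqslant j$, with a coefficient vector of norm at most $\sqrt{2}$. Your version is self-contained, avoids the reduction step, and gives the slightly better probability $1-2e^{-t^2}$. The cited version is more modular: it reuses the rectangular theorem unchanged and never has to deal with the symmetric dependence structure.

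You are also right to state explicitly the hypotheses that the paper's formulation leaves out: symmetry, independence of the entries on and above the diagonal, and centering. They are genuinely needed. If every entry equals one common Rademacher variable, or every entry equals $1$, then $\|\boldsymbol{M}\|_{\text{op}} = N$ and the bound fails for large $N$. In the proof of Theorem~\ref{labelrecovery} these hypotheses do hold, conditionally on $Z$, for both $\boldsymbol{M}$ and $\boldsymbol{M}_U$.
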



\begin{proof}[Proof of Theorem~\ref{labelrecovery}]
The minimization problem we consider is
\begin{equation}
    \min_{T \in \mathcal{U}_K(\boldsymbol{1}_N/N)} \underbrace{\sum_{i, j}^{N} \sum_{k,l}^K D_U(A_{ij}, \Theta_{kl})T_{ik}T_{jl}}_{L_U(\bT, \bTheta)},
    \label{eq:main}
\end{equation}
where $A$ is the random adjacency matrix coming from a Bernoulli SBM and $\bTheta$ is assumed to be the \emph{actual} connectivity matrix in SBM. The first quantity we consider is 
\[
\mathcal{L}_U(\bT,\bTheta) := \mathbb{E}\left[L_U(\bT,\bTheta) | Z \right]
\]
where the expectation is conditional given $Z$, i.e. the actual labels used to sample $\bA$ via SBM \emph{and} the actual $\bTheta$. In other words: $\mathbb{E}[A_{ij}|Z] = \Theta_{Z_i Z_j}$. 

We recall from proposition \ref{prop:expectedloss} that the minimizers of $\mathcal{L}_U(\bT, \bTheta)$ are of the form $T^{*}:=\frac{1}{N}ZP_\sigma$ where $\sigma \in \mathfrak{S}_K$.


We take again the function $F_N$ defined in lemma 1 (notice that $F_N(T) = \mathcal{L}(T) - \mathcal{L}(T^*)$).


Here the space of the transports depends on $N$, indeed it is $\mathcal{U}_K(\mathbbm{1}_N/N)$. Hence, the convergence theorem of $M$ estimator doesn't hold. We have to use instead a sequence of functions $F_N$ each defined on $\mathcal{U}_K(\mathbbm{1}_N/N)$, and unify them with a projection $\pi_N : \mathcal{U}_K(\mathbbm{1}_N/N) \to \R^{K\times K}$.

Let $\displaystyle \pi_N : T \mapsto \left( \sum_{Z_i=k}{T_{il}} \widehat{\alpha}_k^{-1} \right)_{kl}$, a ``projection" of the transport plan into the set matrices with positive entries such that lines sum to one, where $\displaystyle\widehat{\alpha}_k := \frac{1}{N}\sum_{i}Z_{ik}$.

Let $f(B) = \sum_{kk'll'}{ B_{kk'}B_{ll'}d(\Theta_{kl}, \Theta_{k'l'})}$ which is continuous on the set $\mathcal{U
}_K(\mathbbm{1}_K)$  wich is compact, and $f_N(B) = \sum_{kk'll'}   {\widehat{\alpha}_k\widehat{\alpha}_l B_{kk'}B_{ll'}d(\Theta_{kl}, \Theta_{k'l'})}$. A good property of $f_N$ is that $f_N \circ \pi_N$ is related to $F_N$ through the inequality $F_N \geqslant f_N \circ \pi_N - C/N$ with a constant $C>0$, independent of $N$ \footnote{The exact constant is $C = \Delta(\bTheta) K^2$ where $\Delta(\bTheta)  = \max_{k,k',l'}{d(\Theta_{kk}, \Theta_{k'l'})} > 0$ }.

We start by fixing the threshold $\delta > 0$. Now, we fix $\displaystyle \eta < \gamma/2$, and we will work under the event $\mathcal{E} = \Big(\max_{k}{|\alpha_k - \widehat{\alpha}_k | \leqslant \eta}\Big)$. Notice this implies $\displaystyle f_N \geqslant \frac{\gamma^2}{4} f$.

Now, there exists $\varepsilon := \varepsilon(\delta) > 0$ such that 

\begin{equation}
    \underset{\|B - I_K\|_1 \geqslant \delta}{f(B)} \geqslant \varepsilon
\end{equation}


\begin{equation*}
    \begin{split}
    \|T - Z/N\|_1 &= \sum_{i, k}{|T_{ik} - \delta_{kZ_i}/N|}  \\
    &= \sum_{k, l}\sum_{Z_i=l}{|T_{ik} - \delta_{kl}/N|} \\
    &= \sum_{k\neq l}\sum_{Z_i=l}{T_{ik}} + \sum_{k, Z_i=k}{\frac{1}{N} - T_{ik}} \\
    &=\sum_{k\neq l}{\widehat{\alpha}_l\pi_N(T)_{lk}} + \sum_{k, Z_i=k}{\frac{1}{N} - T_{ik}} \\
    &=\sum_{k\neq l}{\widehat{\alpha}_l\pi_N(T)_{lk}} + \sum_{k}{\widehat{\alpha}_k(1 - \pi_N(T)_{kk})} \\
    &\leqslant \|\pi_N(T) - I_K\|_1 \text{ because each } \widehat{\alpha}_k \leqslant1 \\
    \end{split}
\end{equation*}

Then $\delta \leqslant \|T - Z/n\|_1 \Longrightarrow \delta \leqslant \|\pi_N(T) - I_K\|_1 $. So $f(\pi_N(T)) \geqslant \varepsilon$.

Then $\displaystyle\varepsilon \leqslant f(\pi_N(T))  \Longrightarrow \varepsilon \frac{\gamma^2}{4} \leqslant f_N(\pi_N(T)) \leqslant F_N(T) + \Delta(\bTheta)K^2/N$ where $\Delta(\bTheta) = \max_{k,k',l'}{d(\Theta_{kk}, \Theta_{k'l'})}$. Taking $N$ large enough so that $ \Delta(\bTheta)K^2/N \leqslant \varepsilon\gamma^2/8$, we get that : $\delta \leqslant \|T - Z/N\|_1$ implies that $F_N(T) \geqslant \varepsilon \gamma^2/8$.

For all $k\in \llbracket K \rrbracket$, let $\mathcal{E}_k = (| \alpha_k - \widehat{\alpha}_k| \leqslant \eta) $, which is $Z$ measurable.

$\P(\mathcal{E}_k \mid Z) = \E[\mathbbm{1}_{\mathcal{E}_k }\mid Z] = \mathbbm{1}_{\mathcal{E}_k } = \mathbbm{1}{\Bigg\{\left|\alpha_k - N^{-1}\sum_{i=1}^N{Z_{ik}}\right| \leqslant \eta\Bigg\}}$.

Now, we'll see that we can control the event $\Big\{ \|T - Z/N\|_1\geqslant \delta \Big\}$ on the event $\displaystyle \bigcap_{k=1}^K{\mathcal{E}_k}$ through a basic probability lemma presented hereafter.

\paragraph{Probability lemma}

Let $\IB$ such that $\P(\IB)\geqslant 1 - \textcolor{Purple}{t}$. $\P(\IA^c \cup \IB^c) \leqslant \P(\IA^c) + \P(\IB^c) \leqslant \P(\IA^c) +  \textcolor{Purple}{t} = 1- \P(\IA) +  \textcolor{Purple}{t}$. Then, $\P(\IA\cap \IB) \geqslant \P(\IA) -  \textcolor{Purple}{t}$. Then $ \P(\IA) \leqslant \P(\IA \cap \IB ) +  \textcolor{Purple}{t} \leqslant \P(\IC) +  \textcolor{Purple}{t}$ for any $\IC$ such that $\IA\cap \IB \subseteq \IC$.

Then, applying this simple lemma yields : 

\begin{equation*}
    \begin{split}
       \P\left( \textcolor{Red}{\|T - Z/N\|_1 \geqslant \delta} \mid Z \right) &\leqslant \P\left( \textcolor{Red}{\|T - Z/N\|_1 \geqslant \delta}, \textcolor{Blue}{\bigcap_{k=1}^{K}\mathcal{E}_k} \mid Z\right) +  \textcolor{Purple}{c_n} \\
       &\leqslant \P\left(\textcolor{Green}{ \mathcal{L}_N(T)- \mathcal{L}_N(T^*) \geqslant \varepsilon \gamma^2 / 8} \mid Z \right) +  \textcolor{Purple}{c_n}
    \end{split}
\end{equation*}

Where $\displaystyle c_n = \sum_{k=1}^K{\P(\mathcal{E}_k^c \mid Z) } = \sum_{k=1}^K{\mathbbm{1}_{\mathcal{E}_k^c}} \xrightarrow[n \to \infty]{\text{a.s}} 0$ by the Strong Law of Large numbers.

We can even have $\|T - Z/N\|_1$ goes to zero almost surely with the Borel-Cantelli's lemma.

Indeed, $\P(\|T - Z/N\|_1 \ge \delta ) \leqslant \P\left( \mathcal{L}(T) - \mathcal{L}(T^*)\geqslant \varepsilon(\delta) \gamma^2/8 \right) + \E(c_n)$.

and we can apply Hoeffding's inequality on $\E(c_n)$ which is nothing else than $\sum_{k=1}^K{\P(\mathcal{E}_k^c)}$.

Now, we have

\begin{equation}
\mathbb{P}\left( \Vert \widehat{\bT} - \bT^*\Vert > \delta \Big| Z \right) \leqslant \mathbb{P} \left( \mathcal{L}_U(\widehat{\bT}, \bTheta) - \mathcal{L}_U(\bT^*, \bTheta) > \varepsilon \gamma^2/8 \Big| Z \right) + c_n.
\label{eq:first_bound}
\end{equation}
Then
\begin{equation*}
    \begin{split}
    \mathcal{L}_U(\widehat{\bT}, \bTheta) - \mathcal{L}_U(\bT^*, \bTheta) &= \mathcal{L}_U(\widehat{\bT}, \bTheta) - L_U(\widehat{\bT},\bTheta) + \underbrace{L_U(\widehat{\bT},\bTheta) - L_U(\bT^*, \bTheta)}_{\leqslant0} \\ &+ L_U(\bT^*,\bTheta) - \mathcal{L}_U(\bT^*, \bTheta) \\
    & \leqslant\mathcal{L}_U(\widehat{\bT}, \bTheta) - L_U(\widehat{\bT},\bTheta) + L_U(\bT^*,\bTheta) - \mathcal{L}_U(\bT^*, \bTheta) \\
    & \leqslant|\mathcal{L}_U(\widehat{\bT}, \bTheta) - L_U(\widehat{\bT},\bTheta)| + |L_U(\bT^*,\bTheta) - \mathcal{L}_U(\bT^*, \bTheta)| \\
    & \leqslant2\left(\sup_{\bT} |L_U(\bT, \bTheta) - \mathcal{L}_U(\bT, \bTheta)|\right).
    \end{split}
\end{equation*}

Thus, combining the above lines with Eq.~\eqref{eq:first_bound}, one obtains
\begin{equation}
 \mathbb{P}\left( \Vert \widehat{\bT} - \bT^*\Vert > \delta \Big| Z \right) \leqslant\mathbb{P}\left(\sup_{\bT} |L_U(\bT, \bTheta) - \mathcal{L}_U(\bT, \bTheta)|>\frac{\varepsilon \gamma^2}{8} \Big| Z \right) + c_n. 
    \label{eq:second_bound}
\end{equation}

Now, we would like to bound the the right-hand term of the inequality above. We introduce the random matrices $\boldsymbol{M}$, $\boldsymbol{M}_U$ such that $M_{ij} = A_{ij} - \E[A_{ij} | Z], M_{U,ij} = U(A_{ij}) - \E[U(A_{ij} )| Z]$. These matrices are symmetric and have subgaussian entries under $\P^* = \P(\cdot | Z)$. This is obvious for $\boldsymbol{M}$, and for $\boldsymbol{M}_U$, note that $U(A_{ij})$ takes only two values $U(0)$ and $U(1)$. Hence, $\boldsymbol{M}_U$ has indeed subgaussian entries. 

\begin{equation*} 
    \begin{split}
     &|L_U(\bT,\bTheta) - \mathcal{L}_U(\bT, \bTheta)| \\ &= \left| \sum_{ijkl} (U(A_{ij})-\E[U(A_{ij})|Z]) + U'(\Theta_{kl})(A_{ij} - \E[A_{ij}|Z])) T_{ik} T_{jl} \right| \\
     & \leqslant  \left| \sum_{ijkl} (U(A_{ij})-\E[U(A_{ij})|Z])  T_{ik} T_{jl} \right| + \left| \sum_{ijkl} (A_{ij}-\E[A_{ij}|Z]) U'(\Theta_{kl})  T_{ik} T_{jl} \right| \\
     &\leqslant \frac{1}{N^2}\left| \underbrace{\sum_{ij}{(U(A_{ij}) - \E[U(A_{ij})|Z])}}_{=\mathbf{1}^T_N \boldsymbol{M}_U \mathbf{1}_N} \right| + c \sum_{kl} \left|\underbrace{ \sum_{ij}{T_{ik}T_{jl}(A_{ij}-\E[A_{ij} |  Z])}}_{= \bT_k^\top \boldsymbol{M} \bT_l}\right|  \\
     &\leqslant \frac{1}{N}\|\boldsymbol{M}_U\|_{\text{op}}  + c \sum_{kl}{\|\bT_k\| \|\bT_l\| \| \boldsymbol{M} \|_{\text{op}}} \\
     & \leqslant \frac{1}{N}\|\boldsymbol{M}_U\|_{\text{op}} + c \frac{K^2}{N} \| \boldsymbol{M} \|_{\text{op}} \text{ because } \forall k \|\bT_k\|\leqslant \frac{1}{\sqrt{N}} \\
    \end{split}
    \end{equation*}

In the second inequality, we used assumption the fact that $|U'(\Theta_{kl})| \leqslant c$. In the case of Bernoulli SBM, we recall that it is implied by assumption (\textbf{A2}).

Therefore  

\begin{equation*}
\begin{split}
    \sup_{\bT \in \mathcal{U}_K(\boldsymbol{1}_N/N)} |L_U(\bT, \bTheta) - \mathcal{L}_U(\bT, \bTheta)| 
    &\leqslant\frac{1}{N}\|\boldsymbol{M}_U\|_{\text{op}} + c \frac{K^2}{N} \| \boldsymbol{M} \|_{\text{op}} \\
    & \lesssim \frac{1}{N}(\sqrt{N} + t ) + cK^2\frac{1}{N}(\sqrt{N}+t)\\
& \lesssim \frac{1}{\sqrt{N}}  +  \frac{t}{N}
    \end{split}
\end{equation*}
    where the last inequality holds under $\P^*$  with probability at least $1-8e^{-t^2}$ according to Theorem~\ref{Th:opnorm}. Note that the constants hidden behind $\lesssim$ are absolute constants and do not depend on $Z$ so that the last inequality holds under $\P$ with probability at least $1-8e^{-t^2}$. Therefore, if we set $\displaystyle \frac{\varepsilon\gamma^2}{8} = C\left(\frac{1}{\sqrt{N}} + \frac{t}{N} \right)$, for some positive constant $C$, we get $t = N\left(\frac{\gamma^2\varepsilon}{16C} - \frac{1}{\sqrt{N}}\right)$ and 

 \begin{equation*}
     \begin{split}
     \mathbb{P}\left(\sup_{\bT \in \mathcal{U}_K(\boldsymbol{1}_N/N)} |L_U(\bT, \bTheta) - \mathcal{L}_U(\bT, \bTheta)|>\frac{\varepsilon\gamma^2}{8} \right) &\leqslant1 - \left[1 - 8\exp{\left(-N^2\left(\frac{\gamma^2\varepsilon}{16C} - \frac{1}{\sqrt{N}}\right)^2\right)}\right] \\
     & = 8\exp{\left(-N^2\left(\frac{\gamma^2\varepsilon}{16C} - \frac{1}{\sqrt{N}}\right)^2\right)}.
     \end{split}
 \end{equation*}
    
    Finally, $\displaystyle\mathbb{P}\left(\sup_{\bT \in \mathcal{U}_K(\boldsymbol{1}_N/N)} |L_U(\bT, \bTheta) - \mathcal{L}_U(\bT, \bTheta)|>\frac{\varepsilon\gamma^2}{8} \right) \underset{N\to +\infty}{\xrightarrow{\hspace{0.9cm}}} 0$. 
    
    Thanks to Borel-Cantelli's Lemma, we even have almost sure convergence, and the rate of convergence is $\mathcal{O}(1/\sqrt{N})$, similar to generalization bound of learning algorithms.
\end{proof}

\subsection{Estimation of \texorpdfstring{$\bTheta$}{} \label{app:proof_Th2}}

\begin{lemma}\label{lemma:sup_abs_bound}Let $x\in \mathcal{X}, f : \mathcal{X} \to \R, g : \mathcal{X}\times \mathcal{Y} \to \R$. We have the following inequality : 
    \begin{equation*}
        \left| \underset{y\in\mathcal{Y}}{\sup} \ g(x, y)  - f(x) \right| \leqslant \underset{y\in\mathcal{Y}}{\sup}\Big| {g(x, y)} - f(x) \Big|   
    \end{equation*}
\end{lemma}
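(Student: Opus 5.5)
The inequality is elementary, so I plan to prove it directly from the definition of the supremum. I will split into the two one-sided bounds that together define the absolute value. Set $S := \sup_{y\in\mathcal{Y}} g(x,y)$ and $R := \sup_{y\in\mathcal{Y}} |g(x,y) - f(x)|$. The claim is $|S - f(x)| \leqslant R$, and if $R=+\infty$ there is nothing to prove, so I assume $R<\infty$.

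\emph{Upper bound.} For every $y\in\mathcal{Y}$ we have
\begin{equation*}
g(x,y) - f(x) \leqslant |g(x,y)-f(x)| \leqslant R,
\end{equation*}
so $g(x,y) \leqslant f(x) + R$. Taking the supremum over $y$ gives $S - f(x) \leqslant R$. In particular $S$ is finite.

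\emph{Lower bound.} For every $y$, I will use
\begin{equation*}
f(x) - S \leqslant f(x) - g(x,y) \leqslant |g(x,y)-f(x)| \leqslant R,
\end{equation*}
where the first inequality holds because $g(x,y)\leqslant S$. This gives $f(x) - S \leqslant R$, provided $\mathcal{Y}$ is nonempty, so that at least one such $y$ exists.

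Combining the two bounds yields $|S - f(x)| \leqslant R$, which is the statement.

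The only delicate points are bookkeeping. The first is the convention that $\mathcal{Y}\neq\emptyset$; this is implicit here, since the lemma will be applied with $\mathcal{Y}$ the set of $\balpha$ or of transport plans. The second is allowing $S$ or $R$ to be $+\infty$, which is harmless as shown above. I do not expect a real obstacle.

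The intended use is in the proof of Lemma~\ref{prop:ineq}. There one takes $f = -\srGW_p(\bA,\bTheta)$ and $g(\bTheta,\balpha) = N^{-2}\log p(\bA|\bTheta,\balpha)$, and then takes a further sup over $\bTheta$. It therefore reduces the uniform bound to bounding $|g - f|$ pointwise in $\balpha$ by $\log K / N$.
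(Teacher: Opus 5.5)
Your argument is correct and is the paper's one-line proof: apply $|\sup E|\leqslant\sup|E|$ to $E=\{g(x,y)-f(x): y\in\mathcal{Y}\}$. You unpack this into its two one-sided bounds and make the nonemptiness and infinite-value caveats explicit.

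Your closing remark about the intended use is wrong, though it does not affect the proof. The paper proves Lemma~\ref{prop:ineq} by bounding $\sup_{\balpha}$ directly through the ELBO and does not use this lemma there. It uses the lemma only in the proof of Theorem~\ref{Th:Pconv}, with $f=\mathbb{M}$ and $g=N^{-2}\log p(\bA|\bTheta,\balpha)$. Moreover, the bound by $\log K/N$ holding pointwise in $\balpha$ that you describe fails in general, for example when $\balpha$ is a vertex of the simplex.
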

\begin{proof}
    This result follows from the fact that for any $E\subseteq \R$, $|\sup{E}|\leqslant\sup{|E|}$.
\end{proof}

We now prove Lemma~\ref{prop:ineq} in Section~\ref{sec:main}.

\begin{proof}[Proof of Lemma~\ref{prop:ineq}]

The evidence lower bound gives us : 
\begin{align*}
    \log p(\bA | \bTheta, \balpha) &\geqslant \sum_{ijkl}{\tau_{ik}\tau_{jl}\log{p(A_{ij} | \Theta_{kl})}} + \sum_{ik}{\tau_{ik}\log{\frac{\alpha_k}{\tau_{ik}}}} \\
\end{align*}

Now, taking $\alpha_k = N^{-1}\sum_{i=1}^N{\tau_{ik}}$, we get : 

\begin{align*}
  \underset{\balpha}{\sup} \log p(\bA | \bTheta, \balpha) &\geqslant \log p(\bA | \bTheta, \balpha) \\
  &\geqslant \sum_{ijkl}{\tau_{ik}\tau_{jl}\log{p(A_{ij} | \Theta_{k\ell})}} + \sum_{k=1}^K\sum_{i=1}^N{\tau_{ik}\log{{\alpha}_k}- \sum_{k=1}^K\sum_{i=1}^N\tau_{ik}\log{\tau_{ik}}} \\
  & \geqslant \sum_{ijkl}{\tau_{ik}\tau_{jl}\log{p(A_{ij} | \Theta_{k\ell})}} + N \sum_{k=1}^K{{\alpha}_k\log{{\alpha}_k}} \text{ \quad as } -\sum_{ik}{\tau_{ik}\log{\tau_{ik}}} \geqslant 0 \\
  & \geqslant \sum_{ijkl}{\tau_{ik}\tau_{jl}\log{p(A_{ij} | \Theta_{k\ell})}} - N \log{K}
\end{align*}

using $\sum_{k=1}^K{{\alpha}_k\log{{\alpha}_k}} \geqslant - \log{K}$ in the last inequality.

Then, maximizing wrt $\tau$, we get : 

\begin{equation}
 \frac{1}{N^2}\underset{\balpha}{\sup} \log p(\bA | \bTheta, \balpha) \geqslant -\srGW(\bA, \bTheta) - \frac{\log{K}}{N}    
\end{equation}

On the other hand, 

\begin{align*}
     \log p(\bA | \bTheta, \balpha) &= \log \left( \sum_{z\in \mathcal{Z}_{N,K}}{ \exp \left( \sum_{i\neq j k\ell}{z_{ik}z_{jl}\log{p(A_{ij}| \Theta_{k\ell})}}\right) \P(Z=z)} \right) \\
     &\leqslant \underset{\tau \in [0, 1]^{N\times K}, \sum_{k}\tau_{ik}=1 }{\max}{\left\{\sum_{ijkl}{\tau_{ik}\tau_{jl}\log{p(A_{ij} | \Theta_{k\ell})}}\right\}}  \\
     &= - N^2 \srGW(\bA, \bTheta)
\end{align*}
  
Maximizing over $\balpha$ on the left-hand side and dividing by $N^2$ gives us 

\begin{equation}
    \frac{1}{N^2}\underset{\balpha}{\sup} \log p(\bA | \bTheta, \balpha) \leqslant - \srGW(\bA, \bTheta)
\end{equation}

We end up with 

\begin{equation}
\begin{split}
       &-\srGW(\bA, \bTheta) - \frac{\log{K}}{N}     \leqslant \frac{1}{N^2}\underset{\balpha}{\sup} \log p(A | \bTheta, \balpha) \leqslant - \srGW(\bA, \bTheta) \\
      \Longleftrightarrow
       &- \frac{\log{K}}{N} \leqslant \frac{1}{N^2}\underset{\balpha}{\sup} \log p(\bA | \bTheta, \balpha) + \srGW(\bA, \bTheta) \leqslant 0 \\
       \Longrightarrow &\left| \frac{1}{N^2}\underset{\balpha}{\sup} \log p(\bA | \bTheta, \balpha) + \srGW(\bA, \bTheta) \right| \leqslant \frac{\log{K}}{N}
\end{split}
\end{equation}

This final inequality holds whatever $\bTheta$ so that we obtain the claim. 
\end{proof}

We are now ready to prove Theorem~\ref{Th:Pconv} in Section~\ref{sec:main}.

\begin{proof}[Proof of Theorem~\ref{Th:Pconv}] 
In virtue of Theorem 3.6 of \cite{ConsistencySBM}, we know that the log-likelihood of the SBM converges uniformly to a function $\bTheta \mapsto \mathbb{M}(\bTheta)$, such that 
\begin{equation*}
    \sup_{\bTheta, \balpha} \left| \frac{1}{N^2}\log p(A | \bTheta, \balpha) - \mathbb{M}(\bTheta) \right| \underset{N\to +\infty}{\overset{\P}{\xrightarrow{\hspace{0.9cm}}}}  0.
\end{equation*}
Then we have that
\begin{equation*}
\begin{split}
    & \left| - \srGW(\bA, \bTheta) - \mathbb{M}(\bTheta) \right| \\
    &= \left| - \srGW(\bA, \bTheta) - \frac{1}{N^2}\sup_{\balpha} \log p(\bA | \bTheta, \balpha) + \frac{1}{N^2}\sup_{\balpha} \log p(\bA | \bTheta, \balpha) -  \mathbb{M}(\bTheta) \right|\\
    &\leqslant\left| \srGW(\bA, \bTheta) + \frac{1}{N^2}\sup_{\balpha} \log p(\bA | \bTheta, \balpha) \right| + \left|\frac{1}{N^2}\sup_{\balpha} \log p(\bA | \bTheta, \balpha) -  \mathbb{M}(\bTheta) \right| \\
    &\leqslant\frac{\log K}{N} + \sup_{\balpha} \left|\frac{1}{N^2}\log p(\bA | \bTheta, \balpha) -  \mathbb{M}(\bTheta) \right|
\end{split}
\end{equation*}
Where the last inequality comes from bounding the first term using Proposition \ref{Th:Pconv} and bounding the second term using Lemma \ref{lemma:sup_abs_bound}. 
This inequality is valid for any $\bTheta$ so it is also valid for the supremum in $\bTheta$ i.e.
\begin{equation} \label{eq:unif_conv_nll}
    \sup_{\bTheta}\left| - \srGW(\bA, \bTheta) - \mathbb{M}(\bTheta) \right| \leqslant\frac{\log K}{N} + \sup_{\bTheta, \balpha} \left|\frac{1}{N^2}\log p(\bA | \bTheta, \balpha) -  \mathbb{M}(\bTheta) \right|
\end{equation} 

Therefore using equation \eqref{eq:unif_conv_nll}, we have that $\bTheta \mapsto -\srGW(\bA, \bTheta)$ converges uniformly to $\mathbb{M}$.
Since we minimize $\bTheta \mapsto \srGW(\bA,\bTheta)$ in practice, we rewrite the previous result in terms of the objective function $f_N(\bTheta) := \srGW(\bA,\bTheta)$ and its limit $f(\bTheta) := -\mathbb{M}(\bTheta)$. Equation~\eqref{eq:unif_conv_nll} implies that
\[
\sup_{\bTheta} \left| f_N(\bTheta) - f(\bTheta) \right| \xrightarrow[]{\mathbb{P}} 0,
\]
i.e., $f_N$ converges uniformly to $f$.

As a consequence, we have using the classical relation \cite[Section 5.2]{van1998asymptotic}
\[
\left| \inf_{\bTheta} f_N(\bTheta) - \inf_{\bTheta} f(\bTheta) \right|
\leqslant\sup_{\bTheta} | f_N(\bTheta) - f(\bTheta) |,
\]
which shows that the minimal values also converge.

Under standard identifiability conditions ensuring that $f$ admits a unique minimizer $\bTheta^\star$, which therefore coincides with the maximizer of $\bTheta \mapsto \mathbb{M}(\bTheta)$ recovered by the negative log-likelihood, it follows that any sequence of minimizers of $\srGW(\bA,\bTheta)$ converges in probability to $\bTheta^\star$.

\end{proof}

\subsection{Efficient computation of the srGW for statistical inner loss}\label{sec:appendix_various_losses} In \cite{PeyreGW}, authors found efficient computation of Gromov-Wasserstein is guarenteed if the inner loss $\ell$ of the GW can be decomposed in the form $\ell(a, b) = f(a) + g(b) - h_1(a)h_2(b)$, also used in \cite{van2025distributional}. In our setting, the inner loss is a negative log likelihood function. It turns out that for most  classical models including zero inflated exponential likelihood, the neg log likelihood can be decomposed in this form.

\begin{itemize}
    \item Gaussian model : $\ell(a, b) = (a-b)^2 = a^2 + b^2 - 2ab$
    \item Bernoulli model : $\ell(a, b) = -\log(1-b) - a\log(b/(1-b))$
    \item Poisson model : $\ell(a, b) = b - a\log(b) + \log(a!)$ 
    \item Exponential model : $\ell(a, b) = -\log(b) + ba$
\end{itemize}

The losses of gaussian and bernoulli models already appear in GW litterature. Indeed, they correspond to the classical square loss used in the seminal paper  \cite{PeyreGW} and the binary cross entropy loss shown to be relevent in \cite{van2025distributional} to perform neighbor embedding in the context of dimensionality reduction. It is important that the srGW barycenter should not have trivial solution so that the problem is well-posed. As shown in \cite{van2025distributional}, trivial solutions of srGW exist when the inner loss $\ell$ is not a proper divergence. From a statistical point of view, we will require that the statistical model considered is indentifiable so that the KL defined by $\displaystyle \ell(\theta_1, \theta_2) = \E_{a\sim p(\cdot | \theta_1)}\left[\log \frac{p(a| \theta_1)}{p(a| \theta_2)}\right]$ is a proper divergence. 

For these use cases, we can consider a variant of the Proposition 3 in \cite{PeyreGW} to derive closed-form solutions for $\widehat{\bTheta}$. One can easily show that for a composite loss function $\ell$, such that $g$ and $h_2$ are differentiable and $(g'/h2')$ is invertible, then
\begin{equation}
\widehat{\bTheta} = (g’/ h_2’)^{-1} \left( \bT^\top h_1(\bA) \bT \oslash \bh \bh^\top \right)
\label{eq:closed_theta}
\end{equation}
where $\bh = \bT^\top \mathbf{1}_N$. This equation allows to check simply that for Gaussian, Bernoulli and Poisson models, $\widehat{\bTheta}_{1} = \bT^\top \bA \bT \oslash \bh \bh^\top 
$, and the inverse for the exponential model.

\section{Selection of \texorpdfstring{$K$}{}}\label{app:mod_sel}

As stated in Eq.~\eqref{sbm_estimators}, the estimators $\widehat{\bT}$ and $\widehat{\bTheta}$ are obtained by solving the following minimization problem
\[
f(K):=\min_{\bTheta \in \mathbb{R}^{K \times K}, \bT \in \mathcal{U}_K(\mathbf{1}_N/N)}  \underbrace{\sum_{i,j}\sum_{k,l} \ell(A_{ij}|\Theta_{kl})T_{ik}T_{jl}}_{L(\bT, \bTheta, K)}.
\]
In case the actual $K$ (say $K^*$) is unknown, one may hope that fixing a high $K >> K^*$ in the above minimization problem might lead to sparse solutions, in particular to as many empty columns of $\widehat{\bT}$ as the number of exceeding clusters. Unfortunately, this is not what happens, because of the following 
\begin{proposition}
    $f(\cdot)$ is monotonically decreasing in $K$.
\end{proposition}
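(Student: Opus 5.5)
The argument is a nesting one. Any feasible pair for $K$ clusters can be embedded into a feasible pair for $K+1$ clusters with the same objective value. Hence $f(K+1)\leqslant f(K)$, i.e. $f$ is non-increasing, which is the sense of ``monotonically decreasing'' intended in the statement.

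\emph{Step 1 (embedding).} Fix $K$ and any $(\bT,\bTheta)\in\mathcal{U}_K(\mathbf{1}_N/N)\times\R^{K\times K}$. Define $\tilde\bT := [\bT \mid \mathbf{0}_N]\in\R_+^{N\times (K+1)}$, obtained by appending an empty column. Since $\tilde\bT\mathbf{1}_{K+1}=\bT\mathbf{1}_K=\mathbf{1}_N/N$, we have $\tilde\bT\in\mathcal{U}_{K+1}(\mathbf{1}_N/N)$. Let $\tilde\bTheta\in\R^{(K+1)\times(K+1)}$ have upper-left block $\bTheta$. Fill the extra row and column with any admissible value $\theta_0$ in the domain of $\ell$, e.g. $\theta_0=\Theta_{11}$, so that $\ell(A_{ij}|\theta_0)$ is finite.

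\emph{Step 2 (equal objective).} In
\[
L(\tilde\bT,\tilde\bTheta,K+1)=\sum_{i,j}\sum_{k,l=1}^{K+1}\ell(A_{ij}|\tilde\Theta_{kl})\tilde T_{ik}\tilde T_{jl},
\]
every term with $k=K+1$ or $l=K+1$ carries the factor $\tilde T_{i,K+1}=0$ or $\tilde T_{j,K+1}=0$. It therefore vanishes, because $\ell(A_{ij}|\theta_0)$ is finite. The remaining terms are exactly those of $L(\bT,\bTheta,K)$.

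\emph{Step 3 (conclusion).} Taking the infimum over $(\bT,\bTheta)$ gives
\[
f(K+1)\leqslant \inf_{\bT,\bTheta} L(\tilde\bT,\tilde\bTheta,K+1)=f(K).
\]
If $f(K)$ is attained, the same inequality holds for minima. Iterating gives $f(K')\leqslant f(K)$ for all $K'\geqslant K$. Consequently, a minimizer at some $K>K^*$ can be taken to be the embedded $K^*$ solution, but sparsity of $\widehat\balpha$ is not forced. This is the point made in the text preceding the statement.

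\emph{Main obstacle.} The only delicate point is finiteness of $\ell$ at the padding value. For the Bernoulli negative log-likelihood, $\ell(0|0)$ or $\ell(1|1)$ is finite, but $\ell(1|0)=+\infty$, and $0\cdot\infty$ must be handled. Choosing $\theta_0$ inside the parameter region (for instance in $[\zeta,1-\zeta]$ under \textbf{A2}, or equal to an existing entry of $\bTheta$) avoids any $0\cdot\infty$ issue.

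Strict decrease is not claimed by this argument. Proving it would require exhibiting a split of one cluster that strictly lowers the loss. That holds generically for random $\bA$, e.g. by separating a single node with its own optimized row of $\bTheta$, but it is not needed for the statement.
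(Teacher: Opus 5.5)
Your proposal is correct and follows the paper's argument: the paper also pads $(\bT,\bTheta)$ to embed $\mathcal{I}_K$ into $\mathcal{I}_{K'}$ and concludes $f(K')\leqslant f(K)$, i.e.\ non-increasing in the sense you identify. Padding $\bTheta$ with an interior value improves slightly on the paper's literal zero padding, which creates $0\cdot\ell(1|0)=0\cdot\infty$ terms for the Bernoulli negative log-likelihood (though $\theta_0=\Theta_{11}$ is only safe if that entry is itself interior, so $\theta_0\in[\zeta,1-\zeta]$ is the robust choice).
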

\begin{proof}[sketch of proof.]
    Let us denote $\mathcal{I}_K:= \lbrace(\bTheta, \bT)| \bTheta \in \mathbb{R}^{K \times K}, \bT \in \mathcal{U}_K(\mathbf{1}_N/N) \rbrace$. Then, for any $K' > K$ it is always possible to augment $(\bTheta, \bT) \in \mathcal{I}_K$ by zero padding, in such a way to make them elements of $\mathcal{I}_{K'}$. In this sense $\mathcal{I}_K \subseteq \mathcal{I}_{K'}$ and $\min_{\mathcal{I}_{K'}} L(\bT, \bTheta, K')\leqslant\min_{\mathcal{I}_K} L(\bT, \bTheta,K)$.
\end{proof}

In practice, we observed that when solving $f(K)$, we obtain dense estimates meaning that all the available clusters of nodes are occupied (i.e. no empty columns in $\widehat{\bT}$). For instance, if the input graph is generated by SBM with $K=2$ (strong) communities and we solve $f(4)$, the resulting clustering consists in 4 communities obtained by splitting each one of the two actual communities in two. 

\section{More on the experiments}\label{app:exp}
In this section, we report additional figures and plots regarding experiments of Section 5.

\subsection{Implementation}

The implementation of the method was based on the implementations of the Python Optimal Transport library \cite{flamary2024pot} and \cite{srGW}. The only addition of the method is the computation of the closed form solution for $\widehat{\bTheta}$ when the transport is fixed.

\subsection{Connectivity matrices}
\begin{figure}[!h]
    \centering
\includegraphics[width=\linewidth]{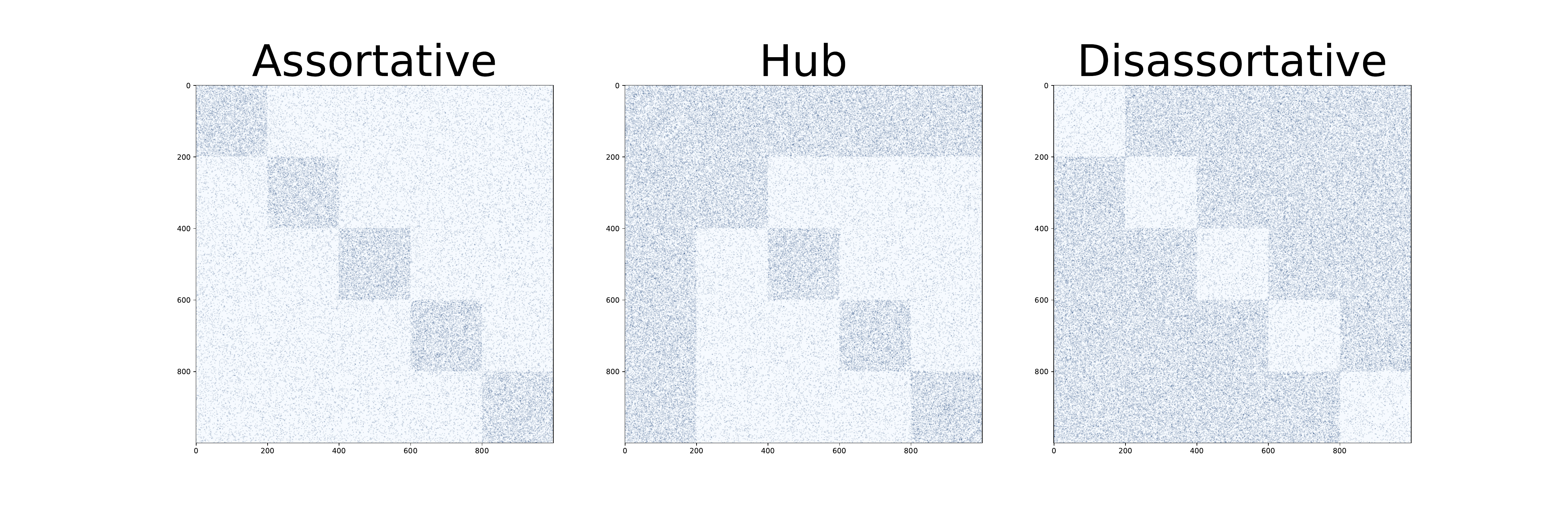}
    \caption{Sampled Adjacency matrices for the 3 SBM scenarions described in Section~\ref{sec:exp}, i.e.  \emph{Assortative}, \emph{Hub} and \emph{Disassortative.}}
    \label{fig:enter-label}
\end{figure}
\clearpage
\subsection{Unbalanced setting in SBM}

\begin{figure}[h!]
        \centering
        \makebox[\textwidth]{%
    \includegraphics[width=1\textwidth]{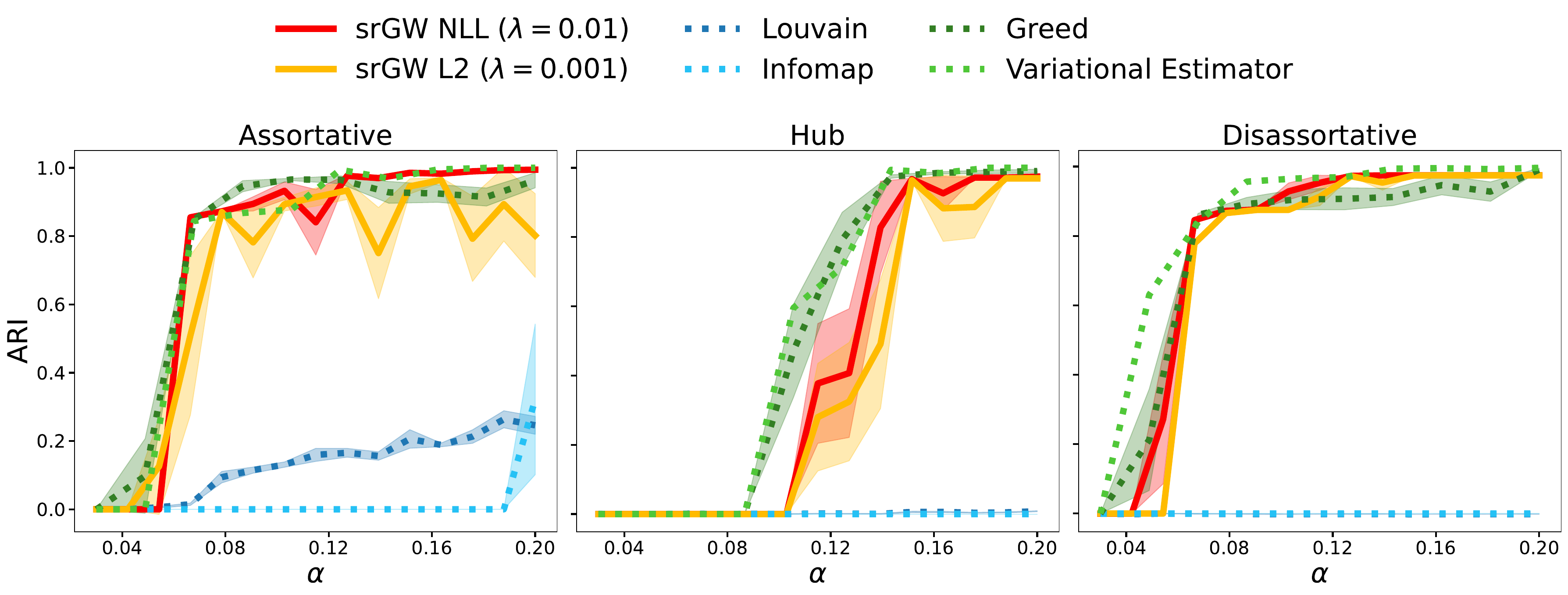}
}

\caption{Evolution of ARI wrt $\alpha$, the greater $\alpha$, the easier to detect the clusters. We take a graph with $N=10^3$ nodes. $\beta = 0.03$ and $\alpha \in [\beta, 0.2]$. Each algorithm, excepts Louvain is searching for at most $K=20$ clusters, the real number of cluster is $K^* = 5$. Here the classes are sampled according to the multinomial law $\P(k) \propto 1/k^2$.}
\end{figure}

\subsection{Runtimes}\label{runtimes}

In this section, we report the running times of each of the methods associated with the partitioning experiment in Figure \ref{fig:runtimes}. Each method was run on a CPU. As expected, Infomap and Louvain were the fastest. The method based on srGW NLL is faster than the Variational Estimator method and the Greed method. This illustrates the advantage of our approach, which bypasses the computationally demanding model selection method used by Variational Estimator.
\begin{figure}[h!]
    \centering
    \includegraphics[width=0.9\linewidth]{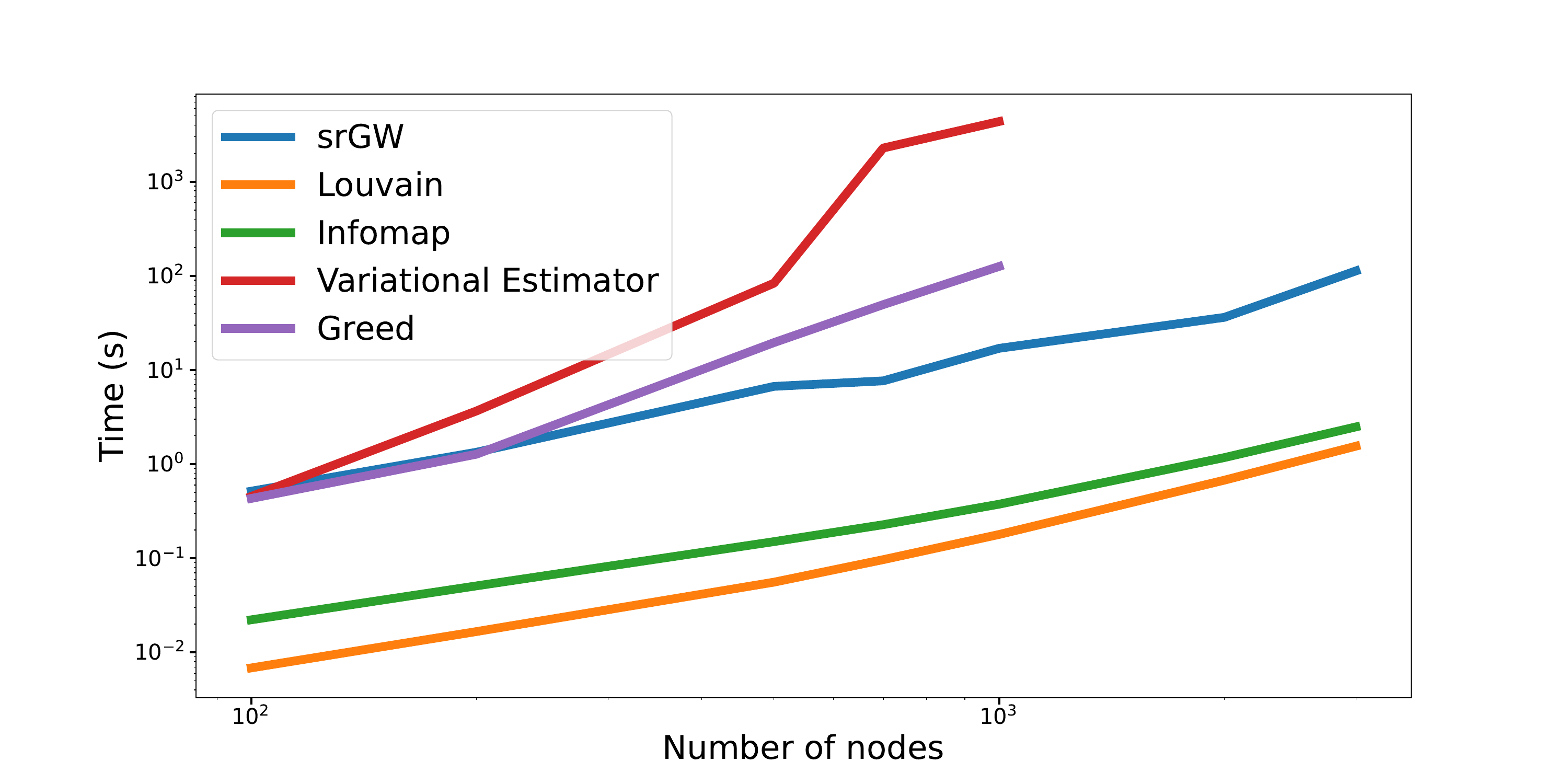}
    \caption{Running times for different graph clustering methods.}
    \label{fig:runtimes}
\end{figure}

Finally, Figure \ref{fig:cpugpu} compares running times of srGW methods on CPU and GPU, \emph{after} the Spectral Clustering initialization.
\begin{figure}[t!]
    \centering
\includegraphics[width=0.9\linewidth]{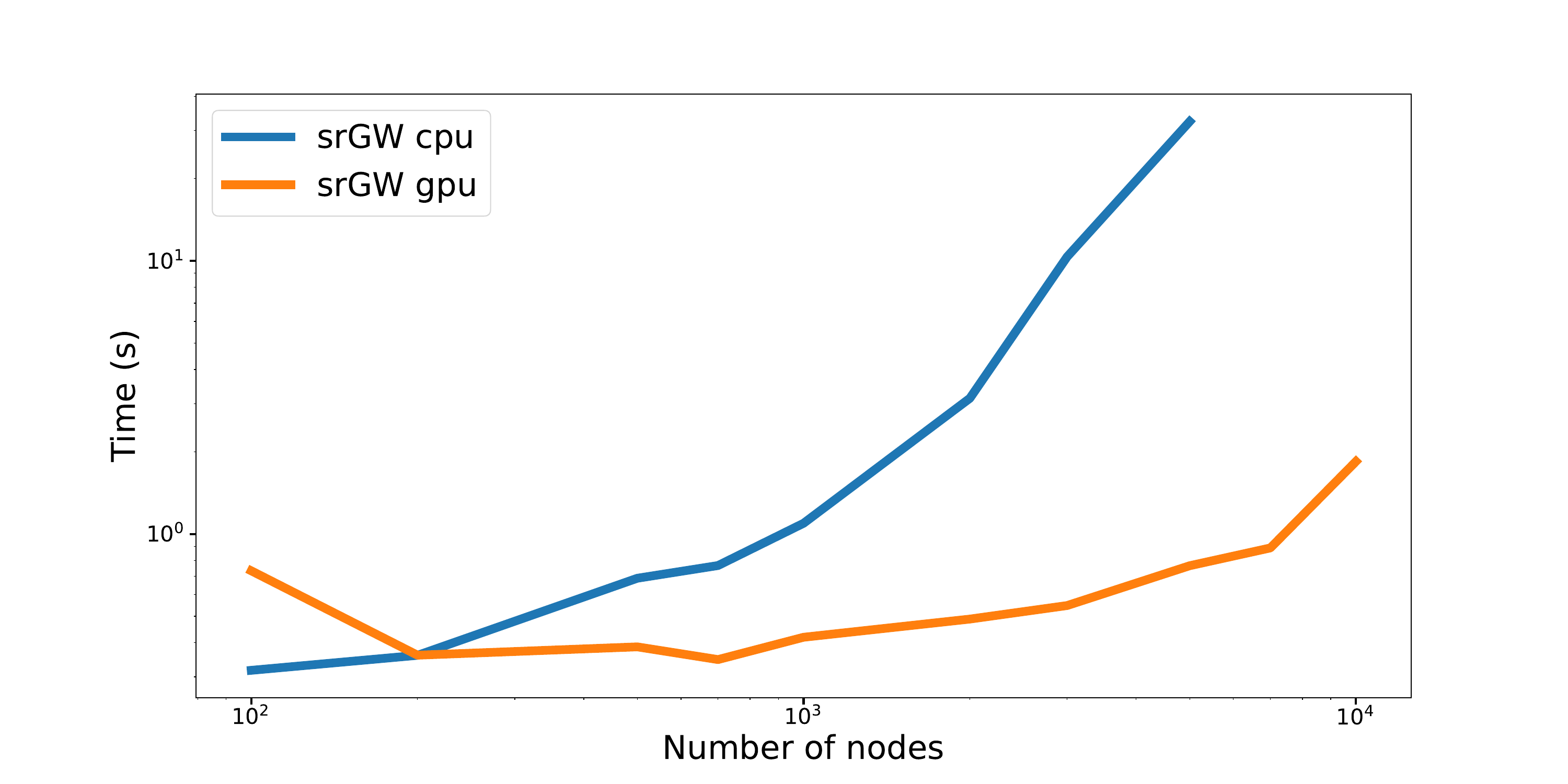}
    \caption{Running Times of srGW NLL on cpu vs gpu.}
    \label{fig:cpugpu}
\end{figure}

\clearpage
\newpage
\section*{NeurIPS Paper Checklist}

The checklist is designed to encourage best practices for responsible machine learning research, addressing issues of reproducibility, transparency, research ethics, and societal impact. Do not remove the checklist: {\bf The papers not including the checklist will be desk rejected.} The checklist should follow the references and follow the (optional) supplemental material.  The checklist does NOT count towards the page
limit. 

Please read the checklist guidelines carefully for information on how to answer these questions. For each question in the checklist:
\begin{itemize}
    \item You should answer \answerYes{}, \answerNo{}, or \answerNA{}.
    \item \answerNA{} means either that the question is Not Applicable for that particular paper or the relevant information is Not Available.
    \item Please provide a short (1–2 sentence) justification right after your answer (even for NA). 
\end{itemize}

{\bf The checklist answers are an integral part of your paper submission.} They are visible to the reviewers, area chairs, senior area chairs, and ethics reviewers. You will be asked to also include it (after eventual revisions) with the final version of your paper, and its final version will be published with the paper.

The reviewers of your paper will be asked to use the checklist as one of the factors in their evaluation. While "\answerYes{}" is generally preferable to "\answerNo{}", it is perfectly acceptable to answer "\answerNo{}" provided a proper justification is given (e.g., "error bars are not reported because it would be too computationally expensive" or "we were unable to find the license for the dataset we used"). In general, answering "\answerNo{}" or "\answerNA{}" is not grounds for rejection. While the questions are phrased in a binary way, we acknowledge that the true answer is often more nuanced, so please just use your best judgment and write a justification to elaborate. All supporting evidence can appear either in the main paper or the supplemental material, provided in appendix. If you answer \answerYes{} to a question, in the justification please point to the section(s) where related material for the question can be found.

IMPORTANT, please:
\begin{itemize}
    \item {\bf Delete this instruction block, but keep the section heading ``NeurIPS Paper Checklist"},
    \item  {\bf Keep the checklist subsection headings, questions/answers and guidelines below.}
    \item {\bf Do not modify the questions and only use the provided macros for your answers}.
\end{itemize}


\begin{enumerate}

\item {\bf Claims}
    \item[] Question: Do the main claims made in the abstract and introduction accurately reflect the paper's contributions and scope?
    \item[] Answer: \answerYes{} 
    \item[] Justification: The claims made in the abstract are supported by theoretical analysis (Sections 3 and 4) and numerical experiments (Section 5) presented in the core of the paper.
    \item[] Guidelines:
    \begin{itemize}
        \item The answer NA means that the abstract and introduction do not include the claims made in the paper.
        \item The abstract and/or introduction should clearly state the claims made, including the contributions made in the paper and important assumptions and limitations. A No or NA answer to this question will not be perceived well by the reviewers. 
        \item The claims made should match theoretical and experimental results, and reflect how much the results can be expected to generalize to other settings. 
        \item It is fine to include aspirational goals as motivation as long as it is clear that these goals are not attained by the paper. 
    \end{itemize}

\item {\bf Limitations}
    \item[] Question: Does the paper discuss the limitations of the work performed by the authors?
    \item[] Answer: \answerYes{}, 
    \item[] Justification: The limitations of the work are discussed in the core of the paper in the section on experiments and in the conclusion.
    \item[] Guidelines:
    \begin{itemize}
        \item The answer NA means that the paper has no limitation while the answer No means that the paper has limitations, but those are not discussed in the paper. 
        \item The authors are encouraged to create a separate "Limitations" section in their paper.
        \item The paper should point out any strong assumptions and how robust the results are to violations of these assumptions (e.g., independence assumptions, noiseless settings, model well-specification, asymptotic approximations only holding locally). The authors should reflect on how these assumptions might be violated in practice and what the implications would be.
        \item The authors should reflect on the scope of the claims made, e.g., if the approach was only tested on a few datasets or with a few runs. In general, empirical results often depend on implicit assumptions, which should be articulated.
        \item The authors should reflect on the factors that influence the performance of the approach. For example, a facial recognition algorithm may perform poorly when image resolution is low or images are taken in low lighting. Or a speech-to-text system might not be used reliably to provide closed captions for online lectures because it fails to handle technical jargon.
        \item The authors should discuss the computational efficiency of the proposed algorithms and how they scale with dataset size.
        \item If applicable, the authors should discuss possible limitations of their approach to address problems of privacy and fairness.
        \item While the authors might fear that complete honesty about limitations might be used by reviewers as grounds for rejection, a worse outcome might be that reviewers discover limitations that aren't acknowledged in the paper. The authors should use their best judgment and recognize that individual actions in favor of transparency play an important role in developing norms that preserve the integrity of the community. Reviewers will be specifically instructed to not penalize honesty concerning limitations.
    \end{itemize}

\item {\bf Theory assumptions and proofs}
    \item[] Question: For each theoretical result, does the paper provide the full set of assumptions and a complete (and correct) proof?
    \item[] Answer:  \answerYes{}
    \item[] Justification: All the novel Propositions and Theorems are provided with the full set of assumptions and correct proofs in the appendix.
    \item[] Guidelines:
    \begin{itemize}
        \item The answer NA means that the paper does not include theoretical results. 
        \item All the theorems, formulas, and proofs in the paper should be numbered and cross-referenced.
        \item All assumptions should be clearly stated or referenced in the statement of any theorems.
        \item The proofs can either appear in the main paper or the supplemental material, but if they appear in the supplemental material, the authors are encouraged to provide a short proof sketch to provide intuition. 
        \item Inversely, any informal proof provided in the core of the paper should be complemented by formal proofs provided in appendix or supplemental material.
        \item Theorems and Lemmas that the proof relies upon should be properly referenced. 
    \end{itemize}

    \item {\bf Experimental result reproducibility}
    \item[] Question: Does the paper fully disclose all the information needed to reproduce the main experimental results of the paper to the extent that it affects the main claims and/or conclusions of the paper (regardless of whether the code and data are provided or not)?
    \item[] Answer: \answerYes{}
    \item[] Justification: The paper describes the experimental setup of the partitioning task for Bernoulli SBMs in three scenarios named Assortative, Hub and Disassortative. The novel estimation method is described and all the competitors are cited.
    \item[] Guidelines:
    \begin{itemize}
        \item The answer NA means that the paper does not include experiments.
        \item If the paper includes experiments, a No answer to this question will not be perceived well by the reviewers: Making the paper reproducible is important, regardless of whether the code and data are provided or not.
        \item If the contribution is a dataset and/or model, the authors should describe the steps taken to make their results reproducible or verifiable. 
        \item Depending on the contribution, reproducibility can be accomplished in various ways. For example, if the contribution is a novel architecture, describing the architecture fully might suffice, or if the contribution is a specific model and empirical evaluation, it may be necessary to either make it possible for others to replicate the model with the same dataset, or provide access to the model. In general. releasing code and data is often one good way to accomplish this, but reproducibility can also be provided via detailed instructions for how to replicate the results, access to a hosted model (e.g., in the case of a large language model), releasing of a model checkpoint, or other means that are appropriate to the research performed.
        \item While NeurIPS does not require releasing code, the conference does require all submissions to provide some reasonable avenue for reproducibility, which may depend on the nature of the contribution. For example
        \begin{enumerate}
            \item If the contribution is primarily a new algorithm, the paper should make it clear how to reproduce that algorithm.
            \item If the contribution is primarily a new model architecture, the paper should describe the architecture clearly and fully.
            \item If the contribution is a new model (e.g., a large language model), then there should either be a way to access this model for reproducing the results or a way to reproduce the model (e.g., with an open-source dataset or instructions for how to construct the dataset).
            \item We recognize that reproducibility may be tricky in some cases, in which case authors are welcome to describe the particular way they provide for reproducibility. In the case of closed-source models, it may be that access to the model is limited in some way (e.g., to registered users), but it should be possible for other researchers to have some path to reproducing or verifying the results.
        \end{enumerate}
    \end{itemize}

\item {\bf Open access to data and code}
    \item[] Question: Does the paper provide open access to the data and code, with sufficient instructions to faithfully reproduce the main experimental results, as described in supplemental material?
    \item[] Answer: \answerNo{}
    \item[] Justification: The code is not provided.
    \item[] Guidelines:
    \begin{itemize}
        \item The answer NA means that paper does not include experiments requiring code.
        \item Please see the NeurIPS code and data submission guidelines (\url{https://nips.cc/public/guides/CodeSubmissionPolicy}) for more details.
        \item While we encourage the release of code and data, we understand that this might not be possible, so “No” is an acceptable answer. Papers cannot be rejected simply for not including code, unless this is central to the contribution (e.g., for a new open-source benchmark).
        \item The instructions should contain the exact command and environment needed to run to reproduce the results. See the NeurIPS code and data submission guidelines (\url{https://nips.cc/public/guides/CodeSubmissionPolicy}) for more details.
        \item The authors should provide instructions on data access and preparation, including how to access the raw data, preprocessed data, intermediate data, and generated data, etc.
        \item The authors should provide scripts to reproduce all experimental results for the new proposed method and baselines. If only a subset of experiments are reproducible, they should state which ones are omitted from the script and why.
        \item At submission time, to preserve anonymity, the authors should release anonymized versions (if applicable).
        \item Providing as much information as possible in supplemental material (appended to the paper) is recommended, but including URLs to data and code is permitted.
    \end{itemize}

\item {\bf Experimental setting/details}
    \item[] Question: Does the paper specify all the training and test details (e.g., data splits, hyperparameters, how they were chosen, type of optimizer, etc.) necessary to understand the results?
    \item[] Answer: \answerYes{}
    \item[] Justification: The experimental settings and choices of hyperparameters, as well as data generation, are described in the Section 5.
    \item[] Guidelines:
    \begin{itemize}
        \item The answer NA means that the paper does not include experiments.
        \item The experimental setting should be presented in the core of the paper to a level of detail that is necessary to appreciate the results and make sense of them.
        \item The full details can be provided either with the code, in appendix, or as supplemental material.
    \end{itemize}

\item {\bf Experiment statistical significance}
    \item[] Question: Does the paper report error bars suitably and correctly defined or other appropriate information about the statistical significance of the experiments?
    \item[] Answer: \answerYes{}
    \item[] Justification: Figures of experiment in Section 5 and in Appendix C report results as mean and standard deviation over 5 runs, indicating statistical variability.
    \item[] Guidelines:
    \begin{itemize}
        \item The answer NA means that the paper does not include experiments.
        \item The authors should answer "Yes" if the results are accompanied by error bars, confidence intervals, or statistical significance tests, at least for the experiments that support the main claims of the paper.
        \item The factors of variability that the error bars are capturing should be clearly stated (for example, train/test split, initialization, random drawing of some parameter, or overall run with given experimental conditions).
        \item The method for calculating the error bars should be explained (closed form formula, call to a library function, bootstrap, etc.)
        \item The assumptions made should be given (e.g., Normally distributed errors).
        \item It should be clear whether the error bar is the standard deviation or the standard error of the mean.
        \item It is OK to report 1-sigma error bars, but one should state it. The authors should preferably report a 2-sigma error bar than state that they have a 96\% CI, if the hypothesis of Normality of errors is not verified.
        \item For asymmetric distributions, the authors should be careful not to show in tables or figures symmetric error bars that would yield results that are out of range (e.g. negative error rates).
        \item If error bars are reported in tables or plots, The authors should explain in the text how they were calculated and reference the corresponding figures or tables in the text.
    \end{itemize}

\item {\bf Experiments compute resources}
    \item[] Question: For each experiment, does the paper provide sufficient information on the computer resources (type of compute workers, memory, time of execution) needed to reproduce the experiments?
    \item[] Answer: \answerYes{} 
    \item[] Justification: The running times of the main experiment of Section 5 is provided in Appendix, with a specific comparison of cpu and gpu running times for the method presented in the paper.
    \item[] Guidelines:
    \begin{itemize}
        \item The answer NA means that the paper does not include experiments.
        \item The paper should indicate the type of compute workers CPU or GPU, internal cluster, or cloud provider, including relevant memory and storage.
        \item The paper should provide the amount of compute required for each of the individual experimental runs as well as estimate the total compute. 
        \item The paper should disclose whether the full research project required more compute than the experiments reported in the paper (e.g., preliminary or failed experiments that didn't make it into the paper). 
    \end{itemize}
    
\item {\bf Code of ethics}
    \item[] Question: Does the research conducted in the paper conform, in every respect, with the NeurIPS Code of Ethics \url{https://neurips.cc/public/EthicsGuidelines}?
    \item[] Answer: \answerYes{}
    \item[] Justification: The research is a theoretical analysis of novel estimators of Stochastic Block Models. It doesn't involve obviously ethically sensitive applications.
    \item[] Guidelines:
    \begin{itemize}
        \item The answer NA means that the authors have not reviewed the NeurIPS Code of Ethics.
        \item If the authors answer No, they should explain the special circumstances that require a deviation from the Code of Ethics.
        \item The authors should make sure to preserve anonymity (e.g., if there is a special consideration due to laws or regulations in their jurisdiction).
    \end{itemize}

\item {\bf Broader impacts}
    \item[] Question: Does the paper discuss both potential positive societal impacts and negative societal impacts of the work performed?
    \item[] Answer: \answerNo{}
    \item[] Justification: The paper focus on theoretical analysis of the stochastic block model and does not provide a discussion of broader positive or negative societal impacts of network analysis.  
    \item[] Guidelines:
    \begin{itemize}
        \item The answer NA means that there is no societal impact of the work performed.
        \item If the authors answer NA or No, they should explain why their work has no societal impact or why the paper does not address societal impact.
        \item Examples of negative societal impacts include potential malicious or unintended uses (e.g., disinformation, generating fake profiles, surveillance), fairness considerations (e.g., deployment of technologies that could make decisions that unfairly impact specific groups), privacy considerations, and security considerations.
        \item The conference expects that many papers will be foundational research and not tied to particular applications, let alone deployments. However, if there is a direct path to any negative applications, the authors should point it out. For example, it is legitimate to point out that an improvement in the quality of generative models could be used to generate deepfakes for disinformation. On the other hand, it is not needed to point out that a generic algorithm for optimizing neural networks could enable people to train models that generate Deepfakes faster.
        \item The authors should consider possible harms that could arise when the technology is being used as intended and functioning correctly, harms that could arise when the technology is being used as intended but gives incorrect results, and harms following from (intentional or unintentional) misuse of the technology.
        \item If there are negative societal impacts, the authors could also discuss possible mitigation strategies (e.g., gated release of models, providing defenses in addition to attacks, mechanisms for monitoring misuse, mechanisms to monitor how a system learns from feedback over time, improving the efficiency and accessibility of ML).
    \end{itemize}
    
\item {\bf Safeguards}
    \item[] Question: Does the paper describe safeguards that have been put in place for responsible release of data or models that have a high risk for misuse (e.g., pretrained language models, image generators, or scraped datasets)?
    \item[] Answer: \answerNA{}
    \item[] Justification: There is no release of data or model.
    \item[] Guidelines:
    \begin{itemize}
        \item The answer NA means that the paper poses no such risks.
        \item Released models that have a high risk for misuse or dual-use should be released with necessary safeguards to allow for controlled use of the model, for example by requiring that users adhere to usage guidelines or restrictions to access the model or implementing safety filters. 
        \item Datasets that have been scraped from the Internet could pose safety risks. The authors should describe how they avoided releasing unsafe images.
        \item We recognize that providing effective safeguards is challenging, and many papers do not require this, but we encourage authors to take this into account and make a best faith effort.
    \end{itemize}

\item {\bf Licenses for existing assets}
    \item[] Question: Are the creators or original owners of assets (e.g., code, data, models), used in the paper, properly credited and are the license and terms of use explicitly mentioned and properly respected?
    \item[] Answer: \answerYes{} 
    \item[] Justification: The authors of the original paper on the semi-relaxed Gromov Wasserstein, the main topic of the paper, are cited as well as the associated Python Library Python Optimal Transport which provides srGW solvers.
    \item[] Guidelines:
    \begin{itemize}
        \item The answer NA means that the paper does not use existing assets.
        \item The authors should cite the original paper that produced the code package or dataset.
        \item The authors should state which version of the asset is used and, if possible, include a URL.
        \item The name of the license (e.g., CC-BY 4.0) should be included for each asset.
        \item For scraped data from a particular source (e.g., website), the copyright and terms of service of that source should be provided.
        \item If assets are released, the license, copyright information, and terms of use in the package should be provided. For popular datasets, \url{paperswithcode.com/datasets} has curated licenses for some datasets. Their licensing guide can help determine the license of a dataset.
        \item For existing datasets that are re-packaged, both the original license and the license of the derived asset (if it has changed) should be provided.
        \item If this information is not available online, the authors are encouraged to reach out to the asset's creators.
    \end{itemize}

\item {\bf New assets}
    \item[] Question: Are new assets introduced in the paper well documented and is the documentation provided alongside the assets?
    \item[] Answer: \answerNA{}
    \item[] Justification: No asset is introduced.
    \item[] Guidelines:
    \begin{itemize}
        \item The answer NA means that the paper does not release new assets.
        \item Researchers should communicate the details of the dataset/code/model as part of their submissions via structured templates. This includes details about training, license, limitations, etc. 
        \item The paper should discuss whether and how consent was obtained from people whose asset is used.
        \item At submission time, remember to anonymize your assets (if applicable). You can either create an anonymized URL or include an anonymized zip file.
    \end{itemize}

\item {\bf Crowdsourcing and research with human subjects}
    \item[] Question: For crowdsourcing experiments and research with human subjects, does the paper include the full text of instructions given to participants and screenshots, if applicable, as well as details about compensation (if any)? 
    \item[] Answer: \answerNA{}
    \item[] Justification: The paper does not involve crowdsourcing nor research with human subjects.
    \item[] Guidelines:
    \begin{itemize}
        \item The answer NA means that the paper does not involve crowdsourcing nor research with human subjects.
        \item Including this information in the supplemental material is fine, but if the main contribution of the paper involves human subjects, then as much detail as possible should be included in the main paper. 
        \item According to the NeurIPS Code of Ethics, workers involved in data collection, curation, or other labor should be paid at least the minimum wage in the country of the data collector. 
    \end{itemize}

\item {\bf Institutional review board (IRB) approvals or equivalent for research with human subjects}
    \item[] Question: Does the paper describe potential risks incurred by study participants, whether such risks were disclosed to the subjects, and whether Institutional Review Board (IRB) approvals (or an equivalent approval/review based on the requirements of your country or institution) were obtained?
    \item[] Answer: \answerNA{}.
    \item[] Justification: The research does not involve human subjects, therefore IRB approval is not applicable.
    \item[] Guidelines:
    \begin{itemize}
        \item The answer NA means that the paper does not involve crowdsourcing nor research with human subjects.
        \item Depending on the country in which research is conducted, IRB approval (or equivalent) may be required for any human subjects research. If you obtained IRB approval, you should clearly state this in the paper. 
        \item We recognize that the procedures for this may vary significantly between institutions and locations, and we expect authors to adhere to the NeurIPS Code of Ethics and the guidelines for their institution. 
        \item For initial submissions, do not include any information that would break anonymity (if applicable), such as the institution conducting the review.
    \end{itemize}

\item {\bf Declaration of LLM usage}
    \item[] Question: Does the paper describe the usage of LLMs if it is an important, original, or non-standard component of the core methods in this research? Note that if the LLM is used only for writing, editing, or formatting purposes and does not impact the core methodology, scientific rigorousness, or originality of the research, declaration is not required.
    \item[] Answer: \answerNA{}
    \item[] Justification: The core method development of this research does not involve LLMs as any important, original, or non-standard components.
    \item[] Guidelines:
    \begin{itemize}
        \item The answer NA means that the core method development in this research does not involve LLMs as any important, original, or non-standard components.
        \item Please refer to our LLM policy (\url{https://neurips.cc/Conferences/2025/LLM}) for what should or should not be described.
    \end{itemize}

\end{enumerate}

\end{document}

%% file: math_commands.tex
\DeclareMathOperator{\balpha}{\boldsymbol{\alpha}}

\DeclareMathOperator{\btau}{\boldsymbol{\tau}}
\DeclareMathOperator{\bTheta}{\boldsymbol{\Theta}}
\DeclareMathOperator{\bA}{\boldsymbol{A}}

\DeclareMathOperator{\bT}{\boldsymbol{T}}
\DeclareMathOperator{\bI}{\boldsymbol{I}}
\DeclareMathOperator{\bP}{\boldsymbol{P}}
\DeclareMathOperator{\bh}{\boldsymbol{h}}

\DeclareMathOperator{\GW}{\text{GW}}
\DeclareMathOperator{\srGW}{\text{srGW}}

\DeclareMathOperator{\R}{\mathbb{R}}

\DeclareMathOperator{\E}{\mathbb{E}}

\renewcommand{\P}{\mathbb{P}}
\renewcommand{\E}{\mathbb{E}}

\RestyleAlgo{ruled}
\setcitestyle{numbers,square}

\newtheorem{theorem}{Theorem}
\newtheorem{definition}{Definition}
\newtheorem{lemma}{Lemma}
\newtheorem{proposition}{Proposition}

%% file: references.bib
@misc{flamary2024pot,
  author = {Flamary, R{\'e}mi and Vincent-Cuaz, C{\'e}dric and Courty, Nicolas and Gramfort, Alexandre and Kachaiev, Oleksii and Quang Tran, Huy and David, Laurène and Bonet, Cl{\'e}ment and Cassereau, Nathan and Gnassounou, Th{\'e}o and Tanguy, Eloi and Delon, Julie and Collas, Antoine and Mazelet, Sonia and Chapel, Laetitia and Kerdoncuff, Tanguy and Yu, Xizheng and Feickert, Matthew and Krzakala, Paul and Liu, Tianlin and Fernandes Montesuma, Eduardo},
  title = {POT Python Optimal Transport (version 0.9.5)},
  
  year = {2024}
}

@misc{blockmodels,
      title={Blockmodels: A R-package for estimating in Latent Block Model and Stochastic Block Model, with various probability functions, with or without covariates}, 
      author={Jean-Benoist Leger},
      year={2016},
      eprint={1602.07587},
      archivePrefix={arXiv},
      primaryClass={stat.CO}, 
}

@article{ComparingPartition,
  title     = "Comparing partitions",
  author    = "Hubert, Lawrence and Arabie, Phipps",
  journal   = "J. Classif.",
  publisher = "Springer Science and Business Media LLC",
  volume    =  2,
  number    =  1,
  pages     = "193--218",
  month     =  dec,
  year      =  1985,
  language  = "en"
}

@article{v,
  author       = {C{\'{e}}dric Vincent{-}Cuaz and
                  Titouan Vayer and
                  R{\'{e}}mi Flamary and
                  Marco Corneli and
                  Nicolas Courty},
  title        = {Online Graph Dictionary Learning},
  journal      = {CoRR},
  volume       = {abs/2102.06555},
  year         = {2021},
  eprinttype   = {arXiv},
  eprint       = {2102.06555},
  bibsource    = {dblp computer science bibliography, https://dblp.org}
}

@InProceedings{OTstructureddata,
  title = 	 {Optimal Transport for structured data with application on graphs},
  author =       {Titouan, Vayer and Courty, Nicolas and Tavenard, Romain and Laetitia, Chapel and Flamary, R{\'e}mi},
  booktitle = 	 {Proceedings of the 36th International Conference on Machine Learning},
  pages = 	 {6275--6284},
  year = 	 {2019},
  editor = 	 {Chaudhuri, Kamalika and Salakhutdinov, Ruslan},
  volume = 	 {97},
  series = 	 {Proceedings of Machine Learning Research},
  month = 	 {09--15 Jun},
  publisher =    {PMLR},
}

@article{AdjustedMetricComparison, author = {Romano, Simone and Vinh, Nguyen Xuan and Bailey, James and Verspoor, Karin}, title = {Adjusting for chance clustering comparison measures}, year = {2016}, issue_date = {January 2016}, publisher = {JMLR.org}, volume = {17}, number = {1}, issn = {1532-4435},}

@article{Greed21,
	Author = {C{\^o}me, Etienne and Jouvin, Nicolas and Latouche, Pierre and Bouveyron, Charles},
	Da = {2021/12/01},
	Doi = {10.1007/s11634-021-00440-z},
	Id = {C{\^o}me2021},
	Isbn = {1862-5355},
	Journal = {Advances in Data Analysis and Classification},
	Number = {4},
	Pages = {957--986},
	Title = {Hierarchical clustering with discrete latent variable models and the integrated classification likelihood},
	Ty = {JOUR},
	Volume = {15},
	Year = {2021}}

@book{Vershynin_2026,
author = {Vershynin, Roman},
title = {High-Dimensional Probability: An Introduction with Applications in Data Science},
edition = {2},
publisher = {Cambridge University Press},
address = {Cambridge},
series = {Cambridge Series in Statistical and Probabilistic Mathematics},
year = {2026}
}

@inproceedings{kloster2014heat,
  title={Heat kernel based community detection},
  author={Kloster, Kyle and Gleich, David F},
  booktitle={Proceedings of the 20th ACM SIGKDD international conference on Knowledge discovery and data mining},
  pages={1386--1395},
  year={2014}
}

@article{holland1983stochastic,
  title={Stochastic blockmodels: First steps},
  author={Holland, Paul W and Laskey, Kathryn Blackmond and Leinhardt, Samuel},
  journal={Social networks},
  volume={5},
  number={2},
  pages={109--137},
  year={1983},
  publisher={Elsevier}
}

@article{nowicki2001estimation,
  title={Estimation and prediction for stochastic blockstructures},
  author={Nowicki, Krzysztof and Snijders, Tom A B},
  journal={Journal of the American statistical association},
  volume={96},
  number={455},
  pages={1077--1087},
  year={2001},
  publisher={Taylor \& Francis}
}

@article{fortunato2010community,
  title={Community detection in graphs},
  author={Fortunato, Santo},
  journal={Physics reports},
  volume={486},
  number={3-5},
  pages={75--174},
  year={2010},
  publisher={Elsevier}
}

@article{borgatti2009network,
  title={Network analysis in the social sciences},
  author={Borgatti, Stephen P and Mehra, Ajay and Brass, Daniel J and Labianca, Giuseppe},
  journal={science},
  volume={323},
  number={5916},
  pages={892--895},
  year={2009},
  publisher={American Association for the Advancement of Science}
}

@article{biernacki2000assessing,
  title={Assessing a mixture model for clustering with the integrated completed likelihood},
  author={Biernacki, Christophe and Celeux, Gilles and Govaert, G{\'e}rard},
  journal={IEEE transactions on pattern analysis and machine intelligence},
  volume={22},
  number={7},
  pages={719--725},
  year={2000},
  publisher={IEEE}
}

@article{mcdaid2013improved,
  title={Improved Bayesian inference for the stochastic block model with application to large networks},
  author={McDaid, Aaron F and Murphy, Thomas Brendan and Friel, Nial and Hurley, Neil J},
  journal={Computational Statistics \& Data Analysis},
  volume={60},
  pages={12--31},
  year={2013},
  publisher={Elsevier}
}

@article{karrer2011stochastic,
  title={Stochastic blockmodels and community structure in networks},
  author={Karrer, Brian and Newman, Mark EJ},
  journal={Physical Review E—Statistical, Nonlinear, and Soft Matter Physics},
  volume={83},
  number={1},
  pages={016107},
  year={2011},
  publisher={APS}
}

@article{von2007tutorial,
  title={A tutorial on spectral clustering},
  author={Von Luxburg, Ulrike},
  journal={Statistics and computing},
  volume={17},
  number={4},
  pages={395--416},
  year={2007},
  publisher={Springer}
}

@inproceedings{shah2024neurocut,
  title={Neurocut: A neural approach for robust graph partitioning},
  author={Shah, Rishi and Jain, Krishnanshu and Manchanda, Sahil and Medya, Sourav and Ranu, Sayan},
  booktitle={Proceedings of the 30th ACM SIGKDD Conference on Knowledge Discovery and Data Mining},
  pages={2584--2595},
  year={2024}
}

@article{kawamoto2018mean,
  title={Mean-field theory of graph neural networks in graph partitioning},
  author={Kawamoto, Tatsuro and Tsubaki, Masashi and Obuchi, Tomoyuki},
  journal={Advances in neural information processing systems},
  volume={31},
  year={2018}
}

@article{Lei_2015,
   title={Consistency of spectral clustering in stochastic block models},
   volume={43},
   ISSN={0090-5364},
   DOI={10.1214/14-aos1274},
   number={1},
   journal={The Annals of Statistics},
   publisher={Institute of Mathematical Statistics},
   author={Lei, Jing and Rinaldo, Alessandro},
   year={2015},
   month=feb }

@article{lee2019review,
  title={A review of stochastic block models and extensions for graph clustering},
  author={Lee, Clement and Wilkinson, Darren J},
  journal={Applied Network Science},
  volume={4},
  number={1},
  pages={122},
  year={2019},
  publisher={Springer}
}

@misc{saade2014spectralclusteringgraphsbethe,
      title={Spectral Clustering of Graphs with the Bethe Hessian}, 
      author={Alaa Saade and Florent Krzakala and Lenka Zdeborová},
      year={2014},
      eprint={1406.1880},
      archivePrefix={arXiv},
      primaryClass={cond-mat.dis-nn}, 
}

@INPROCEEDINGS{SlicedGMM,
  author={Kolouri, Soheil and Rohde, Gustavo K. and Hoffmann, Heiko},
  booktitle={2018 IEEE/CVF Conference on Computer Vision and Pattern Recognition}, 
  title={Sliced Wasserstein Distance for Learning Gaussian Mixture Models}, 
  year={2018},
  volume={},
  number={},
  pages={3427-3436},
  doi={10.1109/CVPR.2018.00361}}

@misc{OTclusteringattributed,
      title={Optimal Transport-Based Clustering of Attributed Graphs with an Application to Road Traffic Data}, 
      author={Ioana Gavra and Ketsia Guichard-Sustowski and Loïc Le Marrec},
      year={2025},
      eprint={2512.15570},
      archivePrefix={arXiv},
      primaryClass={stat.ME}, 
}

@misc{abbe2015,
      title={Recovering communities in the general stochastic block model without knowing the parameters}, 
      author={Emmanuel Abbe and Colin Sandon},
      year={2015},
      eprint={1506.03729},
      archivePrefix={arXiv},
      primaryClass={math.PR}, 
}

@article{GAUCHERMLE,
title = {Maximum likelihood estimation of sparse networks with missing observations},
journal = {Journal of Statistical Planning and Inference},
volume = {215},
pages = {299-329},
year = {2021},
issn = {0378-3758},
doi = {https://doi.org/10.1016/j.jspi.2021.04.003},
author = {Solenne Gaucher and Olga Klopp}
}

@article{Gao_2015,
   title={Rate-optimal graphon estimation},
   volume={43},
   ISSN={0090-5364},
   DOI={10.1214/15-aos1354},
   number={6},
   journal={The Annals of Statistics},
   publisher={Institute of Mathematical Statistics},
   author={Gao, Chao and Lu, Yu and Zhou, Harrison H.},
   year={2015},
   month=dec }

@book{OTVillani,
year = {2009},
author = {Villani,Cédric},
address = {Berlin},
booktitle = {Optimal transport : old and new},
isbn = {978-3-540-71049-3},
language = {eng},
publisher = {Springer},
series = {Grundlehren der mathematischen Wissenschaften},
title = {Optimal transport  : old and new / Cédric Villani},
}

@article{GWMemoli, 
 author = {M\'{e}moli, Facundo}, 
 title = {Gromov---Wasserstein Distances and the Metric Approach to Object Matching}, 
 year = {2011}, 
 issue_date = {August 2011},
 publisher = {Springer-Verlag}, 
 address = {Berlin, Heidelberg}, 
 volume = {11}, 
 number = {4}, 
 issn = {1615-3375},
 journal = {Found. Comput. Math.},
 month = aug,  
 pages = {417–487}, numpages = {71} 
 }

@book{sturm2023space,
  title={The space of spaces: curvature bounds and gradient flows on the space of metric measure spaces},
  author={Sturm, Karl-Theodor},
  volume={290},
  number={1443},
  year={2023},
  publisher={American Mathematical Society}
}

@article{mariadassou2015convergence,
  title={Convergence of the groups posterior distribution in latent or stochastic block models},
  author={Mariadassou, Mahendra and Matias, Catherine},
  journal={Bernoulli},
  volume={21},
  number={1},
  pages={537--573},
  year={2015}
}

@article{Daudin,
  TITLE = {{A mixture model for random graphs}},
  AUTHOR = {Daudin, Jean-Jacques and Picard, Franck and Robin, Stephane},
  JOURNAL = {{Statistics and Computing}},
  PUBLISHER = {{Springer Verlag (Germany)}},
  VOLUME = {18},
  NUMBER = {2},
  PAGES = {173-183},
  YEAR = {2008},
  DOI = {10.1007/s11222-007-9046-7},
  HAL_ID = {hal-01197587},
  HAL_VERSION = {v1},
}

@article{rosvall2009map,
  title={The map equation},
  author={Rosvall, Martin and Axelsson, Daniel and Bergstrom, Carl T},
  journal={The European Physical Journal Special Topics},
  volume={178},
  number={1},
  pages={13--23},
  year={2009},
  publisher={Springer}
}

@article{Louvain,
   title={Fast unfolding of communities in large networks},
   volume={2008},
   ISSN={1742-5468},
   DOI={10.1088/1742-5468/2008/10/p10008},
   number={10},
   journal={Journal of Statistical Mechanics: Theory and Experiment},
   publisher={IOP Publishing},
   author={Blondel, Vincent D and Guillaume, Jean-Loup and Lambiotte, Renaud and Lefebvre, Etienne},
   year={2008},
   month=oct, pages={P10008} }

@inproceedings{GaucherSBM,
 author = {Gaucher, Solenne and Klopp, Olga},
 booktitle = {Advances in Neural Information Processing Systems},
 editor = {M. Ranzato and A. Beygelzimer and Y. Dauphin and P.S. Liang and J. Wortman Vaughan},
 pages = {19947--19959},
 publisher = {Curran Associates, Inc.},
 title = {Optimality of variational inference for stochasticblock model with missing links},
 volume = {34},
 year = {2021}
}

@misc{vayer2025noterelationsmixturemodels,
      title={A note on the relations between mixture models, maximum-likelihood and entropic optimal transport}, 
      author={Titouan Vayer and Etienne Lasalle},
      year={2025},
      eprint={2501.12005},
      archivePrefix={arXiv},
      primaryClass={stat.ML}, 
}

@unpublished{ConsistencySBM,
  TITLE = {{Consistency of maximum-likelihood and variational estimators in the Stochastic Block Model}},
  AUTHOR = {Celisse, Alain and Daudin, J.-J. and Pierre, Laurent},
  NOTE = {working paper or preprint},
  YEAR = {2011},
  MONTH = May,
  HAL_ID = {hal-00593644},
  HAL_VERSION = {v1},
}

@inproceedings{PeyreGW,
  TITLE = {{Gromov-Wasserstein Averaging of Kernel and Distance Matrices}},
  AUTHOR = {Peyr{\'e}, Gabriel and Cuturi, Marco and Solomon, Justin},
  BOOKTITLE = {{Proc. 33rd International Conference on Machine Learning }},
  ADDRESS = {New-York, United States},
  SERIES = {Proc. 33rd International Conference on Machine Learning },
  YEAR = {2016},
  MONTH = Jun,
  HAL_ID = {hal-01322992},
  HAL_VERSION = {v1},
}

@article{srGW,
  author       = {C{\'{e}}dric Vincent{-}Cuaz and
                  R{\'{e}}mi Flamary and
                  Marco Corneli and
                  Titouan Vayer and
                  Nicolas Courty},
  title        = {Semi-relaxed Gromov Wasserstein divergence with applications on graphs},
  journal      = {CoRR},
  volume       = {abs/2110.02753},
  year         = {2021},
  eprinttype    = {arXiv},
  eprint       = {2110.02753},
  bibsource    = {dblp computer science bibliography, https://dblp.org}
}

@misc{rigollet2018EOTML,
      title={Entropic optimal transport is maximum-likelihood deconvolution}, 
      author={Philippe Rigollet and Jonathan Weed},
      year={2018},
      eprint={1809.05572},
      archivePrefix={arXiv},
      primaryClass={math.ST}, 
}

@misc{COT,
      title={Computational Optimal Transport}, 
      author={Gabriel Peyré and Marco Cuturi},
      year={2020},
      eprint={1803.00567},
      archivePrefix={arXiv},
      primaryClass={stat.ML}, 
}

@article{chowdhury2019gromov,
  title={The Gromov--Wasserstein distance between networks and stable network invariants},
  author={Chowdhury, Samir and M{\'e}moli, Facundo},
  journal={Information and Inference: A Journal of the IMA},
  volume={8},
  number={4},
  pages={757--787},
  year={2019},
  publisher={Oxford University Press}
}

@article{xu2019scalable,
  title={Scalable Gromov-Wasserstein learning for graph partitioning and matching},
  author={Xu, Hongteng and Luo, Dixin and Carin, Lawrence},
  journal={Advances in neural information processing systems},
  volume={32},
  year={2019}
}

@inproceedings{chowdhury2021generalized,
  title={Generalized spectral clustering via Gromov-Wasserstein learning},
  author={Chowdhury, Samir and Needham, Tom},
  booktitle={International Conference on Artificial Intelligence and Statistics},
  pages={712--720},
  year={2021},
  organization={PMLR}
}

@article{van2025distributional,
  title={Distributional Reduction: Unifying Dimensionality Reduction and Clustering with Gromov-Wasserstein},
  author={Van Assel, Hugues and Vincent-Cuaz, C{\'e}dric and Courty, Nicolas and Flamary, R{\'e}mi and Frossard, Pascal and Vayer, Titouan},
  journal={Transactions on Machine Learning Research Journal},
  year={2025}
}

@inproceedings{xu2021learning,
  title={Learning graphons via structured gromov-wasserstein barycenters},
  author={Xu, Hongteng and Luo, Dixin and Carin, Lawrence and Zha, Hongyuan},
  booktitle={Proceedings of the AAAI Conference on Artificial Intelligence},
  volume={35},
  number={12},
  pages={10505--10513},
  year={2021}
}

@inproceedings{dhillon2004kernel,
  title={Kernel k-means: spectral clustering and normalized cuts},
  author={Dhillon, Inderjit S and Guan, Yuqiang and Kulis, Brian},
  booktitle={Proceedings of the tenth ACM SIGKDD international conference on Knowledge discovery and data mining},
  pages={551--556},
  year={2004}
}

@inproceedings{chen2023gromov,
  title={A gromov-wasserstein geometric view of spectrum-preserving graph coarsening},
  author={Chen, Yifan and Yao, Rentian and Yang, Yun and Chen, Jie},
  booktitle={International Conference on Machine Learning},
  pages={5257--5281},
  year={2023},
  organization={PMLR}
}

@article{pereira2025survey,
  title={A survey on optimal transport for machine learning: Theory and applications},
  author={Pereira, Luiz Manella and Amini, M Hadi},
  journal={IEEE Access},
  year={2025},
  publisher={IEEE}
}

@inproceedings{xu2019gromov,
  title={Gromov-wasserstein learning for graph matching and node embedding},
  author={Xu, Hongteng and Luo, Dixin and Zha, Hongyuan and Duke, Lawrence Carin},
  booktitle={International conference on machine learning},
  pages={6932--6941},
  year={2019},
  organization={PMLR}
}

@inproceedings{chowdhury2021quantized,
  title={Quantized gromov-wasserstein},
  author={Chowdhury, Samir and Miller, David and Needham, Tom},
  booktitle={Joint European Conference on Machine Learning and Knowledge Discovery in Databases},
  pages={811--827},
  year={2021},
  organization={Springer}
}

@inproceedings{xu2020gromov,
  title={Gromov-Wasserstein factorization models for graph clustering},
  author={Xu, Hongtengl},
  booktitle={Proceedings of the AAAI conference on artificial intelligence},
  volume={34},
  number={04},
  pages={6478--6485},
  year={2020}
}

@inproceedings{vincent2021online,
  title={Online graph dictionary learning},
  author={Vincent-Cuaz, C{\'e}dric and Vayer, Titouan and Flamary, R{\'e}mi and Corneli, Marco and Courty, Nicolas},
  booktitle={International conference on machine learning},
  pages={10564--10574},
  year={2021},
  organization={PMLR}
}

@inproceedings{liu2022robust,
  title={Robust graph dictionary learning},
  author={Liu, Weijie and Xie, Jiahao and Zhang, Chao and Yamada, Makoto and Zheng, Nenggan and Qian, Hui},
  booktitle={The Eleventh International Conference on Learning Representations},
  year={2022}
}

@inproceedings{zeng2023generative,
  title={Generative graph dictionary learning},
  author={Zeng, Zhichen and Zhu, Ruike and Xia, Yinglong and Zeng, Hanqing and Tong, Hanghang},
  booktitle={International Conference on Machine Learning},
  pages={40749--40769},
  year={2023},
  organization={PMLR}
}

@article{vincent2022template,
  title={Template based graph neural network with optimal transport distances},
  author={Vincent-Cuaz, C{\'e}dric and Flamary, R{\'e}mi and Corneli, Marco and Vayer, Titouan and Courty, Nicolas},
  journal={Advances in Neural Information Processing Systems},
  volume={35},
  pages={11800--11814},
  year={2022}
}

@inproceedings{chu2023wasserstein,
  title={Wasserstein barycenter matching for graph size generalization of message passing neural networks},
  author={Chu, Xu and Jin, Yujie and Wang, Xin and Zhang, Shanghang and Wang, Yasha and Zhu, Wenwu and Mei, Hong},
  booktitle={International Conference on Machine Learning},
  pages={6158--6184},
  year={2023},
  organization={PMLR}
}

@inproceedings{qian2024reimagining,
  title={Reimagining graph classification from a prototype view with optimal transport: Algorithm and theorem},
  author={Qian, Chen and Tang, Huayi and Liang, Hong and Liu, Yong},
  booktitle={Proceedings of the 30th ACM SIGKDD conference on knowledge discovery and data mining},
  pages={2444--2454},
  year={2024}
}

@inproceedings{
krzakala2026the,
title={The quest for the {GRA}ph Level autoEncoder ({GRALE})},
author={Paul Krzakala and Gabriel Melo and Charlotte Laclau and Florence d'Alch{\'e}-Buc and R{\'e}mi Flamary},
booktitle={The Thirty-ninth Annual Conference on Neural Information Processing Systems},
year={2026},
}

@inproceedings{clark2025generalized,
  title={Generalized dimension reduction using semi-relaxed Gromov-Wasserstein distance},
  author={Clark, Ranthony A and Needham, Tom and Weighill, Thomas},
  booktitle={Proceedings of the AAAI Conference on Artificial Intelligence},
  volume={39},
  number={15},
  pages={16082--16090},
  year={2025}
}

@inproceedings{van2024graph,
  title={A graph matching approach to balanced data sub-sampling for self-supervised learning},
  author={Van Assel, Hugues and Balestriero, Randall},
  booktitle={NeurIPS 2024 Workshop: Self-Supervised Learning-Theory and Practice},
  year={2024}
}

@article{benamou2015iterative,
  title={Iterative Bregman projections for regularized transportation problems},
  author={Benamou, Jean-David and Carlier, Guillaume and Cuturi, Marco and Nenna, Luca and Peyr{\'e}, Gabriel},
  journal={SIAM Journal on Scientific Computing},
  volume={37},
  number={2},
  pages={A1111--A1138},
  year={2015},
  publisher={SIAM}
}

@article{van1998asymptotic,
  title={Asymptotic Statistics (1998)},
  author={van der Vaart, AW},
  year={1998}
}
